\tikzset{
	>=stealth',
	help lines/.style={dashed, thick},
	axis/.style={<->},
	important line/.style={thick},
	connection/.style={thick, dotted},
}
\def\defemb#1#2{\expandafter\def\csname #1\endcsname
	{\relax\ifmmode #2\else\hbox{$#2$}\fi}}
\newcommand{\leftq}{\llbracket}
\newcommand{\rightq}{\rrbracket}
\newcommand{\defnotate}[1]{{\bf #1}}
\newcommand{\bfbb}{{\sl B\!F\!B\!B}}
\newcommand{\preSet}{{\mathsf{preSet}}}
\newcommand{\sas}{\textsc{sas}${}^{+}$}
\newcommand{\truecost}{h^{\ast}}
\newcommand{\ptask}{\Pi}
\newcommand{\plan}{\pi}
\newcommand{\initstate}{\state_{0}}
\newcommand{\state}{s}
\newcommand{\states}{S}
\newcommand{\vars}{V}
\newcommand{\var}{v}
\newcommand{\val}{d}
\newcommand{\init}{{\initstate}}
\newcommand{\valuefunc}{u}
\newcommand{\budget}{b}
\newcommand{\action}{o}
\newcommand{\actions}{O}
\newcommand{\costfunc}{c}
\newcommand{\domain}{dom}
\newcommand{\eff}{{\mathsf{eff}}}
\newcommand{\pre}{{\mathsf{pre}}}
\newcommand{\actionSeq}{\plan}
\newcommand{\variables}[1]{\cV({#1})}
\newcommand{\applied}[1]{\leftq #1 \rightq}
\newcommand{\brackets}[3]{\left#1#3\right#2}
\newcommand{\tuple}[1]{\brackets{<}{>}{#1}}
\newcommand{\assign}[2]{\tuple{#1/#2}}
\def\reals{{\mathbb R}}
\newcommand{\disland}{L}
\newcommand{\nnreals}{\reals^{0+}}
\newtheorem{theorem}{Theorem}
\newtheorem{lemma}[theorem]{Lemma}
\newtheorem{definition}{Definition}
\newcommand{\uniqNetNegSet}{Y_e}
\newcommand{\netNegPropArg}[2]{y^{#1/#2}}
\newcommand{\negEffectSatisfied}{y^{\var}_{e}}
\newcommand{\unlock}{unlock}
\newcommand{\noOP}{noOP}
\newcommand{\actionName}{\action_{name}}
\begin{document}
%
\title{Automated Tactical Decision Planning Model with Strategic Values Guidance for Local Action-Value-Ambiguity}
\author{Daniel Muller \textsuperscript{1} \and Erez Karpas\\
	The Faculty of Industrial Engineering and Management\\
	Technion - Israel Institute of Technology, Haifa, Israel\\
	\textsuperscript{1} mullerdm@gmail.com, https://mullerd.webgr.technion.ac.il\\
	\textsuperscript{2} karpase@technion.ac.il\\
}
\maketitle
\begin{abstract}

In many real-world planning problems, action's impact differs with a place, time and the context in which the action is applied. The same action with the same effects in a different context or states can cause a different change. In actions with incomplete precondition list, that applicable in several states and circumstances, ambiguity regarding the impact of action is challenging even in small domains. To estimate the real impact of actions, an evaluation of the effect list will not be enough; a relative estimation is more informative and suitable for estimation of action's real impact. Recent work on Over-subscription Planning (OSP) defined the net utility of action as the net change in the state's value caused by the action. The notion of net utility of action allows for a broader perspective on value action impact and use for a more accurate evaluation of achievements of the action, considering inter-state and intra-state dependencies. To achieve value-rational decisions in complex reality often requires strategic, high level, planning with a global perspective and values, while many local tactical decisions require real-time information to estimate the impact of actions. This paper proposes an offline action-value structure analysis to exploit the compactly represented informativeness of net utility of actions to extend the scope of planning to value uncertainty scenarios and to provide a real-time value-rational decision planning tool. The result of the offline pre-processing phase is a compact decision planning model representation for flexible, local reasoning of net utility of actions with (offline) value ambiguity. The obtained flexibility is beneficial for the online planning phase and real-time execution of actions with value ambiguity. Our empirical evaluation shows the effectiveness of this approach in domains with value ambiguity in their action-value-structure. 

\end{abstract}

\section{Introduction}

In many real-world planning and search task there is ``over-subscription" of an action to states or nodes, which raising the concern regarding the increase in search space and brunching factor. In motion and real-time decision making and planning, actions with a degree of freedom in their precondition list, gives the agent flexibility, allowing to apply a compact set of tools to many situations. On the other hand, the flexibility of compact tool set (i.e. actions) that applicable to many situations comes with the complexity to make real-time decisions and to be able to reason about the different impact in different situations. To estimate the real impact of such flexiable actions, evaluation of only the effect list will not be enough. When an action can be applied in many different states, the impact of the same action is relative to the origin state in which the action is applied. Achieving the same outcome from different states can bring damage or wealth, depends on the circumstances and the utility of the origin state . 
In this work we address the ambiguity of action's net utility for actions that applicable in different origin states.

Over-subscription planning (OSP) problem describes many real-world scenarios in which there is ``over-subscription'' of possible achievements to limited resources.~\cite{smith:icaps04,van2004effective,do2004partial,van2004over,nigenda2005planning,benton2006solving,DoBBK:ijcai07,aghighi2014oversubscription,mirkis:domshlak:jair15,muller:Karpas:icaps18}. Scaling up to real-world complexity, with multi-valued arbitrary utility functions over achievements, numerical utility values are challenging even in small domains, due to limited processing capability for inference and utilization of relevant information. To address the complexity of decisions and value-trading inherited in actions, an approach of retaliative estimation of utility of an action is more suitable and informative. Net-benefit planning~\cite{van2004effective,nigenda2005planning,baier2009heuristic,bonet:geffner:aij08,bonet1997robust1,colescoles:icaps11,keyder2009soft} takes a relative estimation approach concerning achievements with an awareness to costs of the action. Recent work defined net utility of actions~\cite{muller:Karpas:icaps18} introduced a new approach to solve OSP problems, taking into account not just the costs but also the net change, i.e. ``what is given away" in order to achieve a change. To define net utility of actions a SAS description tasked assumed, where for each effect of an action a specific precondition is specified. SAS+~\cite{Backstrom:Klein:ci91,Backstrom:Nebel:ci95} representation of tasks allows for a more flexible representation of actions. In SAS+ a precondition list of an action specifies those state variables which must have a certain defined value in order to execute the action and that will also be changed to some other value by the action. This representation allows for an incomplete precondition list which allows for a more compact representation of actions. At the same time, the obtained degree of freedom results with an ambiguity regarding the net change in utility caused by the action. The {\bf selective action split}~\cite{muller:Karpas:icaps18,TR}, manages to determine the net utility of actions perfectly, without unnecessary action split, in most of the domains and tasks in SAS+ by representing compactly the impact of an action in the relative change made by the action with the net utility of an action term. The selective action split label actions with respect to the polarity of the net change caused by the action. However, in several problems, selective action split results in an increase in task size.
This paper proposes a different technique to identify the polarity of actions in SAS+ representation in the context of OSP with arbitrary utility functions over actions. It proposes an offline action-value structure analysis to exploit the compactly represented informativeness of net utility of actions to extend the scope of planning for value uncertainty scenarios, and to provide a real-time value-rational~\cite{muller2018ecosystem} decision planning tool. The result of the offline pre-processing phase is an equivalent OSP task, in which we can determine the net-utility of most of the actions offline. For actions with remained ambiguity regarding the net utility, the offline pre-processing phase provides a compact decision planning model representation for local reasoning during online planning phase which or the real-time execution. Specifically, we introduce an {\em online} approach of action polarity recognition, by deducing the net utility polarity while executing a so called ``unit-effect" actions during search. A set of unit effect actions is generated for each action with a value-structure that containing ambiguity regarding the action's net utility value.  We then show how we can combine our new online technique with the offline selective action split, resulting in better performance than either of them alone.

\section{Background}
We represent OSP model in a language close to \sas\ for classical planning~\cite{Backstrom:Klein:ci91,Backstrom:Nebel:ci95}, an \defnotate{oversubscription planning (OSP)} task is given by a
sextuple {\footnotesize $\ptask = \langle \vars,\initstate,\valuefunc;\actions,\costfunc,\budget\rangle$}, where {\footnotesize $\vars = \{\var_{1},\ldots,\var_{n}\}$} is a finite set of finite-domain {\em state variables\/}, with each complete assignment to {\footnotesize $\vars$} representing a {\em state}, and {\footnotesize $\states = \domain(\var_1)\times\dots\times \domain(\var_n)$} being the {\em state space} of the task; {\footnotesize $\initstate \in \states$} is a designated {\em initial state};
{\footnotesize $\valuefunc$} is an efficiently computable {\em state utility} function {\footnotesize $\valuefunc: \states \rightarrow \reals$}; {\footnotesize $\actions$} is a finite set of {\em actions\/}, with each action {\footnotesize $\action \in \actions$} being represented by a pair {\footnotesize $\langle \pre(\action),\eff(\action)\rangle$} of partial assignments to {\footnotesize $\vars$}, called {\em preconditions} and {\em effects} of {\footnotesize $\action$}, respectively; {\footnotesize $\costfunc: \actions \rightarrow \nnreals$} is an {\em action cost} function; {\footnotesize $\budget \in \nnreals$} is a {\em cost budget} allowed for the task.
An assignment of a variable {\footnotesize $\var$} to a value {\footnotesize $\val$} is denoted by {\footnotesize $\assign{\var}{\val}$} and referred as a {\em fact}. 
For a partial assignment {\footnotesize $p$} to {\footnotesize $\vars$}, let {\footnotesize $\variables{p} \subseteq \vars$} denote the subset of variables instantiated by {\footnotesize $p$}, and, for {\footnotesize $\var\in\variables{p}$}, {\footnotesize $p[\var]$} denote the value provided by {\footnotesize $p$} to the variable {\footnotesize $\var$}.
Action {\footnotesize $\action$} is applicable in a state {\footnotesize $\state$} if {\footnotesize $\state[\var] = \pre(\action)[\var]$} for all {\footnotesize $\var\in\variables{\pre(\action)}$}. Applying {\footnotesize $\action$} changes the value of each {\footnotesize $\var\in\variables{\eff(\action)}$} to {\footnotesize $\eff(\action)[\var]$}, and the resulting state is denoted by {\footnotesize $\state\applied{\action}$}.
A sequence of actions {\footnotesize $\tuple{\action_{1},\dots,\action_{m}}$} denoted by {\footnotesize $\plan$}, called a plan for {\footnotesize $\state$} if it is applicable in {\footnotesize $\state$} and {\footnotesize $\costfunc(\actionSeq) \leq \budget$}. We assume a {\em arbitrary additive utility} function with {\em multi-valued variables}, defined as {\footnotesize $\valuefunc(\state) = \sum_{\assign{\var}{\val}\in\state}{\valuefunc_\var(\val)},$} with {\footnotesize $\valuefunc_\var(\val) \in \reals$} for all variable-value pairs {\footnotesize $\assign{\var}{\val}$}.

{\bf Best-First-Branch-and-Bound ($\bfbb$)} heuristic search for optimal OSP must rely on admissible utility-upper-bounding heuristic function (with budget restrictions) {\footnotesize $h: \states \times \nnreals \rightarrow \nnreals$} to estimate the true utility {\footnotesize $\truecost(\state,\budget)$}. $\bfbb$ also used to solve net-benefit planning problem~\cite{benton2009anytime,benton2012temporal}.

Recent work defined the notion of {\bf net utility value of actions} to increase informativeness and utility of actions in planning domains and solve OSP problems~\cite{muller:Karpas:icaps18,TR}. 
\begin{definition}
	\label{def:actionNetValue}
	For an OSP action {\footnotesize $\action$}, the {\bf net utility} of {\footnotesize $\action$} is {\footnotesize $\valuefunc(\action) = \sum_{\var\in\variables{\eff(\action)}}[{\valuefunc(\eff(\action)[\var])}-{\valuefunc(\pre(\action)[\var])}].$}
\end{definition}

This notion serve to define planning approach interpreting objective as a relative {\em improvement} rather "achieving goals".

\begin{theorem}
	\label{l:timing}
	Given an OSP task {\footnotesize $\ptask$} with a general additive utility function {\footnotesize $\valuefunc$}, for any plan {\footnotesize $\plan$} for {\footnotesize $\ptask$} such that  {\footnotesize $\valuefunc(\state\applied{\plan})>\valuefunc(\init)$}, there is a prefix {\footnotesize $\plan'$} of {\footnotesize $\plan$} such that:
	\begin{enumerate*}
		\item {\footnotesize $\valuefunc(\init\applied{\plan})\le \valuefunc(\init\applied{\plan'})$}, and
		\item for the last action {\footnotesize $\action_{last}$} along {\footnotesize $\plan'$}, we have {\footnotesize $\valuefunc(\action_{last})>0$}.
	\end{enumerate*}
\end{theorem}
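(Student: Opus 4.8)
The plan is to reduce the statement to an elementary fact about the sequence of state utilities visited by $\plan$, once we observe that the net utility of each action executed along $\plan$ coincides with the change in state utility it induces.

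First I would set up notation for the trajectory. Write $\plan = \langle \action_1,\dots,\action_m\rangle$ and let $\state_i = \init\applied{\action_1,\dots,\action_i}$ be the state reached after the first $i$ actions, so that $\state_0 = \init$ and $\state_m = \init\applied{\plan}$. The key observation is that for each $i$,
\[
\valuefunc(\action_i) = \valuefunc(\state_i) - \valuefunc(\state_{i-1}).
\]
To see this, note that applying $\action_i$ in $\state_{i-1}$ changes exactly the variables in $\variables{\eff(\action_i)}$, each from its old value to its effect value, so by additivity $\valuefunc(\state_i)-\valuefunc(\state_{i-1}) = \sum_{\var\in\variables{\eff(\action_i)}}[\valuefunc_\var(\eff(\action_i)[\var]) - \valuefunc_\var(\state_{i-1}[\var])]$; since $\action_i$ is applicable in $\state_{i-1}$ we have $\state_{i-1}[\var] = \pre(\action_i)[\var]$ for every variable constrained by the precondition, and in the assumed description every effect variable is so constrained, which turns the right-hand side into exactly the expression of Definition~\ref{def:actionNetValue}. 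I expect this bookkeeping step --- tying the precondition-based net utility of Definition~\ref{def:actionNetValue} to the actual in-context utility change --- to be the main thing to get right, since it is precisely where the no-ambiguity assumption on the action-value structure is used.

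Next I would choose the prefix. Let $k$ be the \emph{smallest} index at which the trajectory utility attains its maximum, i.e. $k = \min\{\, j : \valuefunc(\state_j) = \max_{0\le i\le m}\valuefunc(\state_i)\,\}$, and set $\plan'$ to be the length-$k$ prefix of $\plan$, so that $\init\applied{\plan'} = \state_k$. Since $\valuefunc(\state_m) > \valuefunc(\state_0)$ by hypothesis, the maximum strictly exceeds $\valuefunc(\state_0)$, hence $k\ge 1$ and $\plan'$ is nonempty. Condition~(1) is then immediate, because $\valuefunc(\init\applied{\plan'}) = \valuefunc(\state_k) = \max_i\valuefunc(\state_i) \ge \valuefunc(\state_m) = \valuefunc(\init\applied{\plan})$.

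Finally I would verify the positivity of the last action. By minimality of $k$, the value $\valuefunc(\state_{k-1})$ is not maximal, so $\valuefunc(\state_{k-1}) < \valuefunc(\state_k)$; combined with the key observation, $\valuefunc(\action_k) = \valuefunc(\state_k) - \valuefunc(\state_{k-1}) > 0$, and $\action_k$ is exactly the last action $\action_{last}$ of $\plan'$, giving condition~(2). The only place strictness is needed is here, and it is exactly why I pick the \emph{first} maximizing index rather than an arbitrary $\argmax$: an arbitrary maximizer could be immediately preceded by another state of equal (maximal) utility, which would yield only $\valuefunc(\action_{last}) \ge 0$.
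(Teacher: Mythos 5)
Your proof is correct. Note that the paper itself does not prove Theorem~\ref{l:timing}: it states it in the Background section as a result imported from the cited prior work on net utility of actions, so there is no in-paper argument to compare against. Your argument is the natural, self-contained one: the identity $\valuefunc(\action_i)=\valuefunc(\state_i)-\valuefunc(\state_{i-1})$ is exactly right under the paper's standing assumption that every effect variable of an action carries a specified precondition (which is what makes Definition~\ref{def:actionNetValue} coincide with the in-context utility change of Definition~\ref{def:actionNetValueNoPre}), and you correctly isolate this as the only place that assumption is used. Taking the \emph{first} index attaining the maximum trajectory utility is the right choice: it gives $k\ge 1$ from the hypothesis $\valuefunc(\init\applied{\plan})>\valuefunc(\init)$, gives condition~(1) because the maximum dominates $\valuefunc(\state_m)$, and gives the strict inequality $\valuefunc(\state_{k-1})<\valuefunc(\state_k)$ needed for condition~(2); your remark that an arbitrary maximizer would only yield $\valuefunc(\action_{last})\ge 0$ shows you see exactly where strictness could be lost. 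The only cosmetic point worth adding is that the prefix $\plan'$ is itself a plan (applicability of a prefix is immediate and nonnegative action costs keep it within budget), which matches the paper's informal gloss that $\plan'$ is ``at most as costly as $\plan$.''
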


In plain word, for each plan {\footnotesize $\plan$}, there is a plan {\footnotesize $\plan'$} that; (i) ends with a positive net utility action, (ii) is at most as costly as {\footnotesize $\plan$}, and (iii) is at least as valuable as {\footnotesize $\plan$}.

\section{Online Detection of the Net Utility Polarity}
We now present a different approach to split actions that differs from the selective action split in the timing of the definition of actions net utility. The selective action split succeeds to define net utility efficiently at the pre-processing stage, but as we mentioned before; sometimes we left with actions that are ambiguous with regard to their net utility signum that could not determined off-line. To handle such actions we can do a normal form encoding and define net utility for each action instance. 
In most of the domains the offline phase detects the net utility for all actions, but real-world scenarios sometimes can be more complex. In such cases as provided in in the example in Figure~\ref{fig:example2}, we can improve the selective action split further by supplying a supplementary mechanism to determine the net utility on-line. 

Online, each action can be applied in few different states, and thus yield different net utilities, depends on the state in which the action is applied. The definition of the action's net utility with a relation to the state in which it is applied changes as follows.

\begin{definition}
	\label{def:actionNetValueNoPre}
	For an OSP action {\footnotesize $\action$} applied in state {\footnotesize $\state$}, the net utility of {\footnotesize $\action$} is 
	{\footnotesize $\valuefunc_{\state}(\action) = \sum_{\var\in\variables{\eff(\action)}}[{\valuefunc_\var(\eff(\action)[\var])}-{\valuefunc_\var(\state[\var])}]$}
\end{definition}

Our approach to that is based on splitting each action into a set of actions, each responsible for achieving a single effect of the original action. For each affected variable with no specified preconditions,  we create a group of actions achieving the same single effect, where each one of them combined with one, different, legal precondition on this variable. If a precondition is specified for the affected variable, then only one action is created which achieves the single effect with one precondition as specified in the original action. 

While applying our approach, the equivalence of the original and the compiled planning tasks must be preserved, which means that for an optimal plan {\footnotesize $\plan$} for the original task {\footnotesize $\ptask$} with solution utility of {\footnotesize $\alpha$}, there exists an optimal plan {\footnotesize $\tilde{\plan}$} for the compiled task {\footnotesize $\tilde{\ptask}$} with the same solution utility of {\footnotesize $\alpha$}, and vice verse. This equivalence is achieved by bounding the execution of the individual actions from each such created action set via dedicated auxiliary control structures that prevent mixing actions from different action sets.

\begin{figure}[t]
	\centering
	\resizebox{\columnwidth}{!}{%
		\begin{tabular}{cc}
			\includegraphics[width=0.6\textwidth]{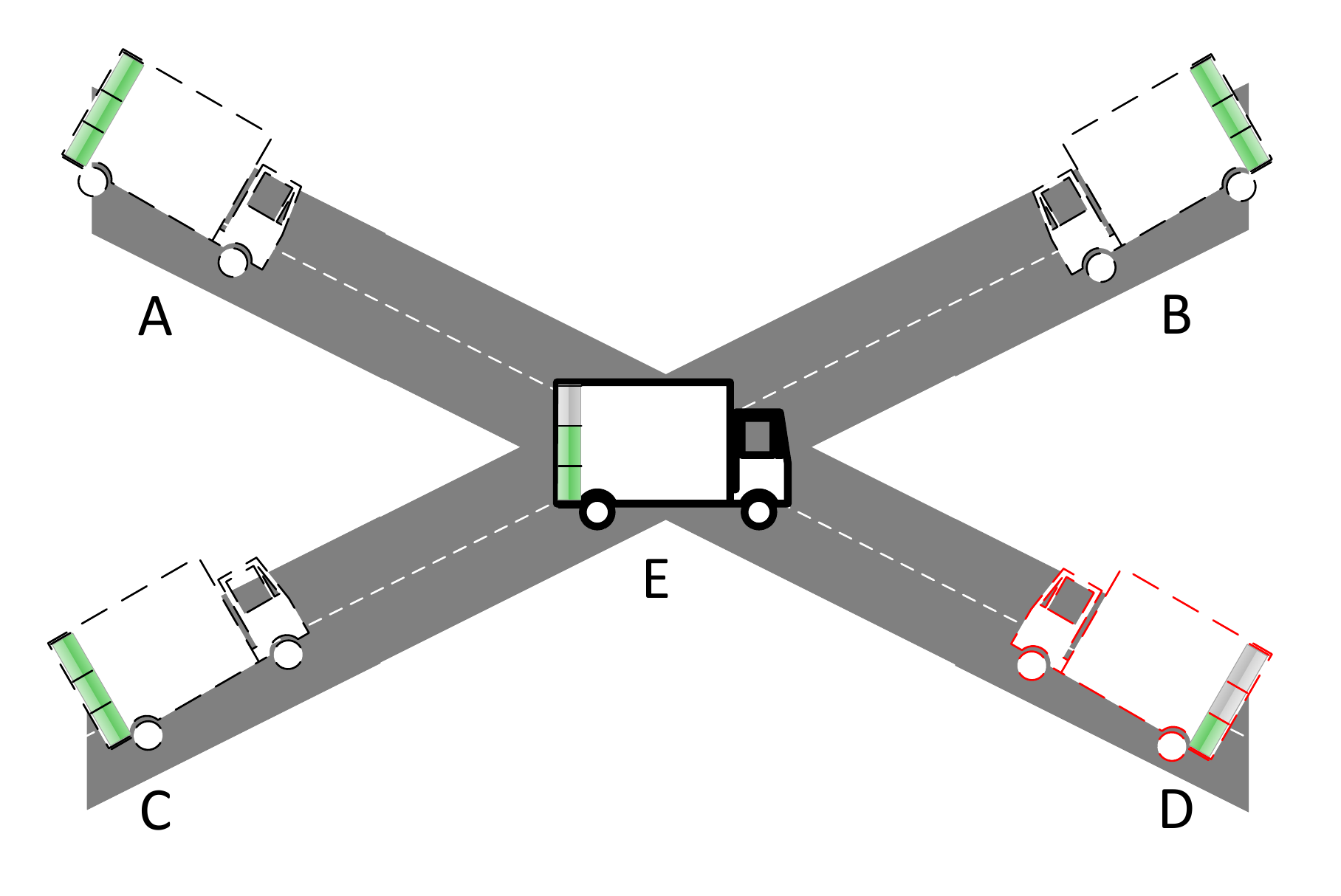}\\
			(a)\\
			\\
			{\footnotesize
				\begin{minipage}{\textwidth}
					\begin{center}
						\resizebox{0.6\columnwidth}{!}{%
						\begin{tabular}{cccccc}
							\multicolumn{1}{c|}{}                  & drive$_{E,2}$                  & drive$_{E,1}$                & drive$_{E,0}$&\\
							\multicolumn{1}{c|}{$\action_i$}       & $i=1$                         & $i=2$                         & $i=3$\\ \hline
							\multicolumn{1}{c|}{$\pre(\action_i)$} & $\{\tuple{f/3}\}$             & $\{\tuple{f/2}\}$             & $\{\tuple{f/1}\}$ &\\
							\multicolumn{1}{c|}{$\eff(\action_i)$} & $\{\tuple{t/E},\tuple{f/2}\}$             & $\{\tuple{t/E},\tuple{f/1}\}$             & $\{\tuple{t/E},\tuple{f/0}\}$&\\
						\end{tabular}
					}
					\end{center}
				\end{minipage}
			} 
			\\
			\ \\
			(b)\\
			\\
			{\footnotesize
				\begin{minipage}{\textwidth}
					\begin{center}
						\resizebox{\columnwidth}{!}{%
						\begin{tabular}{ccccccc}
							\multicolumn{1}{c|}{drive$_{E,2}$ set}       & drive$_{AE,2}$                  & drive$_{BE,2}$                  & drive$_{CE,2}$\\ 
							\multicolumn{1}{c|}{$\action_i$}       & $i=1$                         & $i=2$                         & $i=3$\\ \hline
							\multicolumn{1}{c|}{$\pre(\action_i)$} & $\{\tuple{t/A}, \tuple{\unlock/drive_{E,2}}\}$             & $\{\tuple{t/B}, \tuple{\unlock/drive_{E,2}}\}$             & $\{\tuple{t/C}, \tuple{\unlock/drive_{E,2}}\}$        	\\
							\multicolumn{1}{c|}{$\eff(\action_i)$} & $\{\tuple{t/E}\}$             & $\{\tuple{t/E}\}$             & $\{\tuple{t/E}\}$\\
						\end{tabular}
					}
					\end{center}
				\end{minipage}
			} 
			\\\\
			{\footnotesize
				\begin{minipage}{\textwidth}
					\begin{center}
						\resizebox{\columnwidth}{!}{%
						\begin{tabular}{ccccccc}
							\multicolumn{1}{c|}{drive$_{E,2}$ set} & reduceFuel$_{3 \to 2}$      &  drive$^{E,2}_{unlock}$     		& drive$^{E,2}_{lock}$ \\ 
							\multicolumn{1}{c|}{$\action_i$} & $i=4$     & $i=5$  & $i=6$\\ \hline
							\multicolumn{1}{c|}{$\pre(\action_i)$} & $\{\tuple{f/3}, \tuple{\unlock/drive_{E,2}}\}$             & $\{\tuple{f/3}\}$             &$\{\tuple{\unlock/drive_{E,2}}, \tuple{t/E}, \tuple{f/3} \}$             \\
							\multicolumn{1}{c|}{$\eff(\action_i)$} & $\{\tuple{f/2}\}$ & $\{\tuple{\unlock/drive_{E,2}}\}$  & $\{\tuple{\unlock/\noOP}\}$\\
						\end{tabular}
					}
					\end{center}
				\end{minipage}
			} 
			\\
			\\
			(c)\\			
		\end{tabular}}
	\caption{\label{fig:example2} An example of an OSP task, with (a) illustrating the story, (b) listing the original actions, and (c) details a set of actions compiled from the original action drive{\footnotesize $_{E,2}$}. Each action in (c) achieves a single effect from the effect list of action drive{\footnotesize $_{E,2}$}. actions {\footnotesize $\action_{5}$} and {\footnotesize $\action_{6}$} are auxiliary control predicates. {\footnotesize $\action_{5}$} enables the new single effect actions in drive{\footnotesize $_{E,2}$} set, while preventing from all other sets to be applied. {\footnotesize $\action_{6}$} approves that all effect are achieved, locking current action set and enables starting new action set.  Note, the truck in point D can reach point E, but since it does not meet the fuel precondition for the original action drive{\footnotesize $_{E,2}$} it is not included in the equivalent single action set of drive{\footnotesize $_{E,2}$} in (c).}
\end{figure}

An example of a simple OSP task in Figure~\ref{fig:example2} is used to illustrate our approach of splitting the original action into set of single effect actions, and the use of the auxiliary control structure action set. In this example, we have a truck that can move between locations A, B, C, D and E. The truck has three levels of fuel and each step reduces the fuel by one level. As we can see in this example, a truck which is initially at one of the locations A, B, C and D, with fuel level of three can end up in the same location (E) and with the same fuel level (two). In this case we must have one specific precondition on the fuel level, but for the location there are few possible values as a precondition. We create an action for each one of those possible preconditions.

This OSP task {\footnotesize $\ptask$} described here using two state variables {\footnotesize $\vars = \{t, y, \unlock \}$} with {\footnotesize $\domain(t) = \{A, B, C, D, E \}$, $\domain(f) = \{0, 1, 2, 3\}$} and {\footnotesize $\domain(\unlock) = \{drive_{E,2}, drive_{E,1},drive_{E,0}, \noOP\}$}, where {\footnotesize $t$} stands for the possible location of truck and {\footnotesize $f$} for the fuel level, {\footnotesize $\unlock$} is a control predicate which enables/disables single effect actions sets. For example when drive{\footnotesize $_{E,2}$} set is enabled, all other sets will be disabled. This way we avoid mixing single effect actions from different sets and preserve equivalence. A group of available actions for this example is detailed in  Figure~\ref{fig:example2}(b), and the created, single effect action set for one of those actions (drive{\footnotesize $_{E,2}$}) is detailed in Figure~\ref{fig:example2}(c) along with their auxiliary control predicates. 
In the state model induced by this OSP task, 
we have {\footnotesize $\states = \domain(t) \times \domain(f) \times \domain(\unlock)$}, and the predicate values:
{\footnotesize $\valuefunc(\unlock) = 0$,\\ $\valuefunc(t) = \begin{cases}
2, \;\;\;\; t = E\\
0, \;\;\;\; \text{otherwise}
\end{cases}$, $\valuefunc(f) = \begin{cases}
1, \;\;\;\; f = 1 \\
2, \;\;\;\; f = 2\\
3, \;\;\;\; f = 3\\
0, \;\;\;\; \text{otherwise}
\end{cases}$\\
}

Now let us look at the most basic example of single action plan, presenting our approach. Consider the OSP task {\footnotesize $\ptask$} and its compiled task {\footnotesize $\tilde{\ptask}$}, described in Figure~\ref{fig:example2}. The initial state {\footnotesize $\initstate=\text{A3}$}, i.e. truck is in location A with fuel level of 3. A valid single action plan from this initial state is:
{\footnotesize $\plan = \tuple{drive_{AE,2}}$}
using Definition~\ref{def:actionNetValueNoPre} the net utility will be:
{\footnotesize $\valuefunc(\plan) = \valuefunc(drive_{AE,2}) = [\tuple{{t/E}} - \tuple{{t/A}}] + [\tuple{{f/2}} - \tuple{{f/3}}]  = 1$}
one of the sequences which are equivalent plans in the compiled task {\footnotesize $\tilde{\ptask}$} is: 
{\footnotesize $\tilde{\plan_{1}} = \tuple{drive^{E,2}_{unlock},\; Fuel_{3 \to 2},\; drive_{AE,2},\; drive^{E,2}_{lock}}$}
with a net utility of:
{\footnotesize $\valuefunc(\tilde{\plan}) = \valuefunc(drive^{E,2}_{unlock}) + \valuefunc(Fuel_{3 \to 2}) +\valuefunc(drive_{AE,2}) +\valuefunc(drive^{E,2}_{unlock})= 0 + [\tuple{{f/2}} - \tuple{{f/3}}] + [\tuple{{t/E}} - \tuple{{t/A}}] + 0 =0 -1+ 2 + 0 = 1$}
But what if we switch the order of the actions {\footnotesize $Fuel_{3 \to 2}$} and  {\footnotesize $drive_{AE,2}$} in this sequence? Let us look on the following sequence;
{\footnotesize $\tilde{\plan_{2}} = \tuple{drive^{E,2}_{unlock},\; drive_{AE,2},\; Fuel_{3 \to 2},\; drive^{E,2}_{lock}}$}
In this case, at the end of the applied sequence we get the same net utility accumulated in a different manner, i.e. {\footnotesize $(2 -1= 1)$} instead of {\footnotesize $(-1 +2 = 1)$}. Apparently both {\footnotesize $\tilde{\plan_{1}}$} and {\footnotesize $\tilde{\plan_{2}}$} are equivalent to {\footnotesize $\plan$}, but while examining the action sequence in {\footnotesize $\tilde{\plan_{2}}$} one can see that after applying the action {\footnotesize $drive_{AE,2}$} the utility exceeds the optimal plan {\footnotesize $\plan$} utility reached in the original task {\footnotesize $\ptask$}, while reaching invalid utility in the original task, as illustrated in Figure~\ref{fig:ex.over.opt}. Thus {\footnotesize $\tilde{\plan_{2}}$} is not equivalent to plan {\footnotesize $\plan$}.

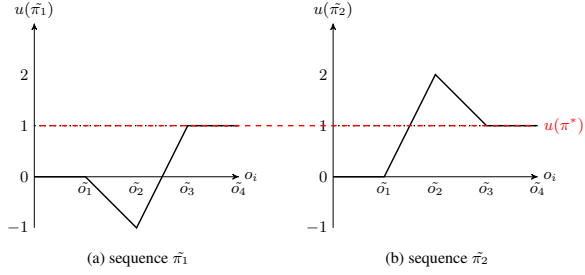
\begin{figure}[t]
		\centering
        {\footnotesize
		\resizebox{0.95\columnwidth}{!}{%
		\begin{tabular}{cc}
			\begin{subfigure}[b]{0.3\textwidth}
				\begin{tikzpicture}[remember picture]
				\coordinate (op1) at (1,0);
				\coordinate (op2) at (2,0);
				\coordinate (op3) at (3,0);
				\coordinate (op4) at (4,0);			
				\coordinate (valmin1) at (0,-1);
				\coordinate (val0) at (0,0);
				\coordinate (val1) at (0,1);
				\coordinate (val2) at (0,2);				
				\coordinate (label) at (2,-1);				
				\draw (op1) node [below] {$\tilde{\action_{1}}$};
				\draw (op2) node [below] {$\tilde{\action_{2}}$};
				\draw (op3) node [below] {$\tilde{\action_{3}}$};
				\draw (op4) node [below] {$\tilde{\action_{4}}$};		
				\draw (valmin1) node [left] {$-1$};
				\draw (val0) node [left] {$0$};
				\draw (val1) node [left] {$1$};
				\draw (val2) node [left] {$2$};	
				\draw [->] (0,0) -- (4,0) node [right]  {$\action_{i}$};
				\draw [->] (0,-1) -- (0,3) node [above] {$\valuefunc(\tilde{\plan_{1}})$};
				\draw[thick] (0,0) -- (1,0) -- (2,-1) -- (3,1) -- (4,1);
				\draw[connection] let \p1=(0,1), \p2=(op4) in (0,1)  -- (4,1)node[right]{};			
				\node (red-1) at (0,1) {}; 		
				\end{tikzpicture}
				\caption{sequence $\tilde{\plan_{1}}$}
				\label{fig:A}
			\end{subfigure}
			\;\;\;\;
			\begin{subfigure}[b]{0.30\textwidth}
				\centering
				\begin{tikzpicture}[remember picture]
				\coordinate (op1) at (1,0);
				\coordinate (op2) at (2,0);
				\coordinate (op3) at (3,0);
				\coordinate (op4) at (4,0);			
				\coordinate (valmin1) at (0,-1);
				\coordinate (val0) at (0,0);
				\coordinate (val1) at (0,1);
				\coordinate (val2) at (0,2);				
				\draw (op1) node [below] {$\tilde{\action_{1}}$};
				\draw (op2) node [below] {$\tilde{\action_{2}}$};
				\draw (op3) node [below] {$\tilde{\action_{3}}$};
				\draw (op4) node [below] {$\tilde{\action_{4}}$};			
				\draw (valmin1) node [left] {$-1$};
				\draw (val0) node [left] {$0$};
				\draw (val1) node [left] {$1$};
				\draw (val2) node [left] {$2$};			
				\coordinate (x2) at (4,0);
				\draw [->] (0,0) -- (4,0) node [right]  {$\action_{i}$};
				\draw [->] (0,-1) -- (0,3) node [above] {$\valuefunc(\tilde{\plan_{2}})$};
				\draw[thick] (0,0) -- (1,0) -- (2,2) -- (3,1) -- (4,1);
				\draw[connection] let \p1=(0,1), \p2=(x2) in (0,1)  -- (\x2,1)node[right, red]
				{$\valuefunc(\plan^\ast)$};			
				\node (red-2) at (4,1) {}; 		
				\end{tikzpicture}
				\caption{sequence $\tilde{\plan_{2}}$}
				\label{fig:B}
			\end{subfigure}
			\begin{tikzpicture}[remember picture,overlay]
			\draw[dashed,red, thick] (red-1) -- (red-2);
			\end{tikzpicture}
		\end{tabular} 
	}
	}
	\caption{\label{fig:ex.over.opt} Graphs (\protect\subref{fig:A}), (\protect\subref{fig:B}) present the cumulative actions' utility of sequences {\footnotesize $\tilde{\plan_{1}}$} and {\footnotesize $\tilde{\plan_{2}}$}, respectively. In sequence {\footnotesize $\tilde{\plan_{1}}$} net negative action applied prior to net positive, and in sequence {\footnotesize $\tilde{\plan_{2}}$} vice verse. While both {\footnotesize $\tilde{\plan_{1}}$} and {\footnotesize $\tilde{\plan_{2}}$} reach same final utility, {\footnotesize $\tilde{\plan_{2}}$} exceeds the optimal utility value in the original task, which violates the equivalence between {\footnotesize $\ptask$} and {\footnotesize $\tilde{\ptask}$}.}
\end{figure}

To preserve the equivalence with the original task, we have to prevent from exceeding the original optimal utility value during the execution of the compiled 
single-effect action set, as illustrated in Figure~\ref{fig:ex.over.opt}. To promise equivalence, we extend the control structure in the compiled task (presented in Figure~\ref{fig:example2}) with a set of auxiliary control predicates and actions, dedicated to verify that the net negative utility actions applied before the net positive actions, as in {\footnotesize $\tilde{\plan_{1}}$}.

\begin{figure}[t]
	\centering
	\resizebox{\columnwidth}{!}{%
		\begin{tabular}{cc}
			{
					\begin{tabular}{ccccccc}
						\multicolumn{1}{c|}{drive$_{E,2}$ set} & reduceFuel$_{3 \to 2}^{\color{red} verify}$      \\ 
						\multicolumn{1}{c|}{$\action_i$}       & $i=4$    \\ \hline
						\multicolumn{1}{c|}{$\pre(\action_i)$} & $\{\tuple{f/3}, \tuple{\unlock/drive_{E,2}}\}$                    	\\
						\multicolumn{1}{c|}{$\eff(\action_i)$} & $\{\tuple{f/2}, {\color{red} \tuple{\netNegPropArg{f}{2}/1}}\}$ \\
					\end{tabular}
			} 
			\\
			(a)\\
			\\
			{
					\begin{tabular}{ccccccc}
						\multicolumn{1}{c|}{drive$_{E,2}$ set}       & {\color{red} drive$_{AE,2}^{verifyNO}$}                  & {\color{red} drive$_{BE,2}^{verifyNO}$}                  & {\color{red} drive$_{CE,2}^{verifyNO}$}\\ 
						\multicolumn{1}{c|}{ $\action_i$}       & {\color{red} $i=7$}                         & {\color{red} $i=8$}                         & {\color{red} $i=9$}\\ \hline
						\multicolumn{1}{c|}{$\pre(\action_i)$} & {\color{red} $\{\tuple{t/A}, \tuple{\unlock/drive_{E,2}}\}$}             &{\color{red}  $\{\tuple{t/B}, \tuple{\unlock/drive_{E,2}}\}$}             & {\color{red} $\{\tuple{t/C}, \tuple{\unlock/drive_{E,2}}\}$}        	\\
						\multicolumn{1}{c|}{$\eff(\action_i)$} & {\color{red} $\{ \tuple{\netNegPropArg{t}{E}/1}\}$}             & {\color{red} $\{\tuple{\netNegPropArg{t}{E}/1}\}$}             & {\color{red} $\{\tuple{\netNegPropArg{t}{E}/1}\}$}\\
					\end{tabular}
			} 
			\\
			(b)\\
			\\		
			{
					\begin{tabular}{ccccccc}
						\multicolumn{1}{c|}{drive$_{E,2}$ set}       & drive$_{AE,2}^{\color{red} +}$                  & drive$_{BE,2}^{\color{red} +}$\\ 
						\multicolumn{1}{c|}{$\action_i$}       & $i=1$                         & $i=2$\\ \hline
						\multicolumn{1}{c|}{$\pre(\action_i)$} & $\{\tuple{t/A}, \tuple{\unlock/drive_{E,2}}, {\color{red} \tuple{\netNegPropArg{t}{E}/1}}, {\color{red} \tuple{\netNegPropArg{f}{2}/1}}\}$             & $\{\tuple{t/B}, \tuple{\unlock/drive_{E,2}}, {\color{red} \tuple{\netNegPropArg{t}{E}/1}}, {\color{red} \tuple{\netNegPropArg{f}{2}/1}}\}$\\
						\multicolumn{1}{c|}{$\eff(\action_i)$} & $\{\tuple{t/E}\}$             & $\{\tuple{t/E}\}$\\
					\end{tabular}
			} 
			\\\\			
			{
					\begin{tabular}{ccccccc}
						\multicolumn{1}{c|}{drive$_{E,2}$ set} & drive$_{CE,2}^{\color{red} +}$ \\ 
						\multicolumn{1}{c|}{$\action_i$}       & $i=3$ \\ \hline
						\multicolumn{1}{c|}{$\pre(\action_i)$} & $\{\tuple{t/C}, \tuple{\unlock/drive_{E,2}}, {\color{red} \tuple{\netNegPropArg{t}{E}/1}}, {\color{red} \tuple{\netNegPropArg{f}{2}/1}}\}$                	\\
						\multicolumn{1}{c|}{$\eff(\action_i)$} & $\{\tuple{t/E}\}$ \\
					\end{tabular}
			} 
			\\
			(c)\\
			\\		
			{
					\begin{tabular}{ccccccc}
						\multicolumn{1}{c|}{drive$_{E,2}$ set} &  drive$^{E,2}_{unlock}$     		& drive$^{E,2}_{lock}$ \\ 
						\multicolumn{1}{c|}{$\action_i$} & $i=5$  & $i=6$\\ \hline
						\multicolumn{1}{c|}{$\pre(\action_i)$} & $\{\tuple{f/3}, \tuple{\unlock/\noOP}\}$             &$\{\tuple{\unlock/drive_{E,2}}, \tuple{t/E}, \tuple{f/3}, {\color{red} \tuple{\netNegPropArg{t}{E}/1}}, {\color{red} \tuple{\netNegPropArg{f}{2}/1}}\}$             \\
						\multicolumn{1}{c|}{$\eff(\action_i)$} & $\{\tuple{\unlock/drive_{E,2}}\}$  & $\{\tuple{\unlock/\noOP}, {\color{red} \tuple{\netNegPropArg{t}{E}/0}}, {\color{red} \tuple{\netNegPropArg{f}{2}/0}}\}$\\
					\end{tabular}
			} 		
			\\
			(d)\\			
		\end{tabular}
	}
	\caption{\label{fig:example_with_verify} A set of actions compiled for the action drive{\footnotesize $_{E,2}$}, extended with an auxiliary control structure for preserving equivalence with the original task {\footnotesize $\ptask$}.}
\end{figure}

The set of actions detailed in Figure~\ref{fig:example_with_verify} is used to illustrate the extended auxiliary control structure which is used to prevent exceeding optimal plan {\footnotesize $\plan$} utility reached in the original task {\footnotesize $\ptask$}. We add a new group of predicates {\footnotesize $\uniqNetNegSet = \{\negEffectSatisfied \mid \var\in \vars, \{e\} \in \domain(\var) \}$} with {\footnotesize$ \domain(\negEffectSatisfied) = \{0,1\}$}, where for each action {\footnotesize $\action$} in the original task {\footnotesize $\ptask$}, we add {\footnotesize $\{\negEffectSatisfied \in \uniqNetNegSet \mid \tuple{\var/e} \in \eff(\action)\}$} to {\footnotesize $\action$}'s equivalent set of single effect actions in task {\footnotesize $\tilde{\ptask}$}. The semantics of {\footnotesize $\tuple{\negEffectSatisfied/1} \in \state$} is that either effect {\footnotesize $e$} has been collected by applying net negative utility action or there is no valid net negative action which achieves effect {\footnotesize $e$}. When the net positive actions in the set defined as; 
{\footnotesize
$\pre(\action^+_{p\to e}) = \left \{ \tuple{\var/p}, \tuple{\unlock/\actionName} \right \}
\bigcup_{\negEffectSatisfied \in \uniqNetNegSet \mid \tuple{\var/e} \in \eff(\action)}\{\tuple{\negEffectSatisfied/1}\},\\
\eff(\action^+_{p\to e}) = \left \{
\tuple{\var/e}\right \}, 
\tilde{\costfunc}(\action^+_{p\to e}) = 0.$
}

This way, net positive utility action can be applied and positive utility value obtained only if all net negative actions has been applied and all negative utility carrying facts that hold in the current state and set has been collected.

To enforce this semantics of {\footnotesize $\negEffectSatisfied$}, the action set {\footnotesize $\actions_{set,o}$} contains a pair of action sets, {\footnotesize $\action^{verify}_{p\to e}$} which verifies that negative utility carrying value should be collected and is collected via net negative utility action, while {\footnotesize $\action^{verifyNo}_{p\to e}$} verifies that the effect can't be achieved by net negative utility action, where

{\centering\footnotesize 
$\bigg\langle\begin{aligned}
\pre(\action^{verify}_{p\to e}) &= \left \{ \tuple{\var/p}, \tuple{\unlock/\actionName} \right \},\\
\eff(\action^{verify}_{p\to e}) &= \left \{ \tuple{\var/e}, \tuple{\negEffectSatisfied/1}\right \}, \tilde{\costfunc}(\action^{verify}_{p\to e}) = 0\\
\end{aligned}\bigg\rangle$,\\
$\bigg\langle\begin{aligned}
\pre(\action^{verifyNo}_{p\to e}) &= \left \{ \tuple{\var/p}, \tuple{\unlock/\actionName} \right \},\\
\eff(\action^{verifyNo}_{p\to e}) &= \left \{ \tuple{\negEffectSatisfied/1}\right \}, \tilde{\costfunc}(\action^{verifyNo}_{p\to e}) = 0
\end{aligned}\bigg\rangle$.\\
}

Figure~\ref{fig:example_with_verify} depicts an example of the set of actions obtained by the compilation for the original action drive{\footnotesize $_{E,2}$} with extended, dedicated auxiliary control structure for preserving equivalence with the original task {\footnotesize $\ptask$}. The extensions beyond the action set in Figure~\ref{fig:example2} are colored red, where {\bf (a)} is a net negative action that achieves {\footnotesize $\tuple{\var/e} \in \eff(\action)$} and verifies that it has been achieved via net negative utility action by collecting predicate {\footnotesize $\negEffectSatisfied$}; {\bf (b)} are actions that verify that {\footnotesize $\tuple{\var/e} \in \eff(\action)$} cannot be achieved via net negative utility action by collecting predicate {\footnotesize $\negEffectSatisfied$}; {\bf (c)} are net positive utility actions that achieve {\footnotesize $\tuple{\var/e} \in \eff(\action)$} and can be applied only after either all the net negative utility actions have been applied, or it has been verified that there is no net negative utility action which achieves {\footnotesize $\tuple{\var/e} \in \eff(\action)$} by preconditioning with {\footnotesize $\bigcup_{\negEffectSatisfied \in \uniqNetNegSet \mid \tuple{\var/e} \in \eff(\action)} \{\tuple{\negEffectSatisfied/1}\}$}; {\bf (d)} are control actions that enable/disable drive{\footnotesize $_{E,2}$} set, where drive{\footnotesize $^{E,2}_{lock}$} resets the auxiliary predicates {\footnotesize $\negEffectSatisfied$} when finishing the sequence. 
	

Using the extended auxiliary control structure for preserving equivalence of plan {\footnotesize $\tilde{\plan}$} for the compiled task {\footnotesize $\tilde{\ptask}$} with the original task {\footnotesize $\ptask$}, {\footnotesize $\tilde{\plan}$} will be; {\scriptsize $\tilde{\plan_{1}} = \langle drive^{E,2}_{unlock}, Fuel_{3 \to 2}^{verify}, drive_{AE,2}^{verifyNO},drive_{AE,2}^{+}, drive^{E,2}_{lock}\rangle$}.
The net positive utility action $drive_{AE,2}^{+}$ can be applied only after all $\tuple{\var/e} \in \eff(\action)$ been verified that either they been collected by applying net negative utility action or there is no valid net negative action which achieves them. Definition~\ref{def:unconstrainedPreCompOnFly} puts together the above observations to work.
\begin{definition}
{\footnotesize
	\label{def:unconstrainedPreCompOnFly}
		Let $\ptask  =\langle \vars,\initstate,\valuefunc;\actions,\costfunc,\budget\rangle$ be an OSP task, and {\bf unit-effect-compilation} of $\ptask$ is an OSP task $\tilde{\ptask} = \langle \tilde{\vars},\tilde{\initstate},\tilde{\valuefunc};\tilde{\actions},\tilde{\costfunc},\tilde{\budget} \rangle$ where $\tilde{\vars} = \vars \cup \{\unlock\} \cup \uniqNetNegSet$ with $\uniqNetNegSet = \{\negEffectSatisfied \mid \var\in \vars, e \in \domain(\var) \}$, $\domain(\negEffectSatisfied) = \{0,1\}$, $\domain(\unlock) = \{\actionName \mid \action \in \actions\} \cup \{\noOP\}$, $\tilde{\init} = \init \cup \{ \tuple{\unlock/\noOP}\} \cup \{\tuple{y/0} \mid y \in \uniqNetNegSet\}$, $\tilde{\valuefunc} = \valuefunc$, $\tilde{\budget} = \budget$, $\tilde{\actions} =\actions_{AllSets} \cup \actions_{unlock} \cup \actions_{lock}$ where
       }
{\scriptsize 			
		$\actions_{AllSets} =   \bigcup_{\action \in \actions} \actions_{set,o}, \;\; with \;\; \actions_{set,o}  = \\
             \begin{aligned}
                &\;\ \{ \action^{verify}_{p\to e} \mid \action \in \actions, p\in \preSet_\var(\action) , \tuple{\var/e} \in \eff(\action),\\
                &\;\;\;\;\;\;\;\; \valuefunc_{\var}(e) - \valuefunc_{\var}(p) \leq 0 \} \;\ \cup\\
                &\;\{ \action^{verifyNo}_{p\to e} \mid \action \in \actions, p\in \preSet_\var(\action) , \tuple{\var/e} \in \eff(\action),\\
                &\;\;\;\;\;\;\;\; \valuefunc_{\var}(e) - \valuefunc_{\var}(p) >0  \} \;\ \cup\\
                &\;\{\action^+_{p\to e} \mid \action \in \actions,p\in \preSet_\var(\action)  , \tuple{\var/e} \in \eff(\action), \\
                &\;\;\;\;\;\;\;\; \valuefunc_{\var}(e) - \valuefunc_{\var}(p) >0 \} \\
                & where:\\
                &\;\ \preSet_\var(\action) = 
                \begin{cases}
                    \pre(\action)[\var], & \var\in\variables{\pre(\action)}\\
                    \domain(\var),& \var\notin\variables{\pre(\action)}\\
                \end{cases}\\
              \end{aligned}$
              \[
                    \begin{split}
                    &\pre(\action^{verify}_{p\to e}) = \left \{ \tuple{\var/p}, \tuple{\unlock/\actionName} \right \}\\
                    &\eff(\action^{verify}_{p\to e}) = \left \{ \tuple{\var/e}, \tuple{\negEffectSatisfied/1}\right \},\tilde{\costfunc}(\action^{verify}_{p\to e}) = 0\\
                    &\pre(\action^{verifyNo}_{p\to e}) = \left \{ \tuple{\var/p}, \tuple{\unlock/\actionName} \right \}\\
                    &\eff(\action^{verifyNo}_{p\to e}) = \left \{ \tuple{\negEffectSatisfied/1}\right \}, \tilde{\costfunc}(\action^{verifyNo}_{p\to e}) = 0\\
                    &\pre(\action^+_{p\to e}) = \left \{ \tuple{\var/p}, \tuple{\unlock/\actionName} \right \}  \bigcup_{\negEffectSatisfied \in \uniqNetNegSet \mid \tuple{\var/e} \in \eff(\action)} \{\tuple{\negEffectSatisfied/1}\}\\
                    &\eff(\action^+_{p\to e}) = \left \{
                    \tuple{\var/e}\right \}, \tilde{\costfunc}(\action^+_{p\to e}) = 0\\
                    \end{split}
			\] 
		$\actions_{unlock} =  \left\{\action_{unlock} \mid \action \in \actions \right\}$ with $\tilde{\costfunc}(\action_{unlock}) = \costfunc(\action)$ and 
		\[
		\begin{split}
		&\pre(\action_{unlock}) =  \pre(\action) \cup \{\tuple{\unlock/\noOP} \}\\
		&\eff(\action_{unlock}) =  \{\tuple{\unlock/\actionName}\} 
		\end{split}
		\]
		\item  $\actions_{lock} =  \left\{\action_{lock} \mid \action \in \actions \right\}$ with $\tilde{\costfunc}(\action_{lock}) = 0$ and, 
		\[
		\begin{split}
		&\pre(\action_{lock}) =  \eff(\action) \cup \{\tuple{\unlock/\actionName}\} \bigcup_{\negEffectSatisfied \in \uniqNetNegSet \mid \tuple{\var/e} \in \eff(\action)} \{\tuple{\negEffectSatisfied/1}\}\\
		&\eff(\action_{lock}) =  \{\tuple{\unlock/\noOP} \} \bigcup_{\negEffectSatisfied \in \uniqNetNegSet \mid \tuple{\var/e} \in \eff(\action)} \{\tuple{\negEffectSatisfied/0}\}\\
		\end{split}
		\]            
}			
\end{definition}

In plain words,  {\footnotesize $\tilde{\ptask}$} extends the structure of {\footnotesize $\ptask$} by
\begin{itemize}
	\item Converting {\footnotesize $\action \in \actions$} into a set of single-effect actions {\footnotesize $\actions_{set,o}$}.
	\item Adding control actions {\footnotesize $\action_{unlock}$}. When all the precondition for the original action {\footnotesize $\action \in \actions$} hold, this action permits execution of actions {\footnotesize $\tilde{\action} \in \actions_{set,o}$}, while locking the ability to perform any other action {\footnotesize $\tilde{\action'} \in \actions_{set,o'}$} such that {\footnotesize $\action' \neq \action$}.   
	\item Adding control actions {\footnotesize $\action_{lock}$}. When all the effects of action {\footnotesize $\action$} are achieved, this action locks the current {\footnotesize $\actions_{set,o}$} and allows starting new single effect actions set {\footnotesize $\actions_{set,o}$}, by nullifying control fact {\footnotesize $\unlock$}.
\end{itemize}

\begin{definition}
	\label{def:sequence}
	For any action {\footnotesize $\action \in \actions$} of an OSP task {\footnotesize $\ptask$}, then actions sequence {\footnotesize $\tilde{\plan}_{o}$} is defined as 
	{\footnotesize $\tilde{\plan}_{o} = \tuple{\action_{unlock},\tilde{\action_{1}},\dots, \tilde{\action_{k}}, \action_{lock}},$} where {\footnotesize $\tilde{\action_{1}},\dots, \tilde{\action_{k}} \in \actions_{set,o}$} are single-effect actions.
\end{definition}

\begin{lemma}
	\label{l:separated}
	Given a valid plan {\footnotesize $\tilde{\plan}$} for an OSP task {\footnotesize $\tilde{\ptask}$}, any action sequence {\footnotesize $\tilde{\plan}_{o}$} in plan {\footnotesize $\tilde{\plan}$} is an {\bf atomic action sequence}, where {\footnotesize $\nexists \tilde{{\action'}} \mid \tilde{{\action}}'  \in \tilde{\plan}_{o}, \tilde{{\action}}' \notin \actions_{set,{\action}}\cup \{\action_{unlock},\action_{lock}\}, \action \ne \action'.$}
\end{lemma}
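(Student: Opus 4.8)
The plan is to show that the control variable $\unlock$ acts as a mutual-exclusion lock: once $\action_{unlock}$ sets it to $\actionName$, it forbids every action outside $\actions_{set,\action} \cup \{\action_{lock}\}$ until $\action_{lock}$ resets it to $\noOP$. First I would tabulate, directly from Definition~\ref{def:unconstrainedPreCompOnFly}, how each class of action in $\tilde{\actions}$ reads and writes $\unlock$. Every $\action_{unlock}$ requires $\tuple{\unlock/\noOP}$ and produces $\tuple{\unlock/\actionName}$; every single-effect action of $\actions_{set,\action}$ (the \emph{verify}, \emph{verifyNo}, and $^+$ variants) requires $\tuple{\unlock/\actionName}$ and leaves $\unlock$ untouched, writing only the task variable $\var$ and the auxiliary predicates $\negEffectSatisfied$; and $\action_{lock}$ requires $\tuple{\unlock/\actionName}$ and produces $\tuple{\unlock/\noOP}$. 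This catalogue is the whole engine of the argument.

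Next I would fix an occurrence of the sequence $\tilde{\plan}_{\action} = \tuple{\action_{unlock}, \tilde{\action_1}, \dots, \tilde{\action_k}, \action_{lock}}$ (Definition~\ref{def:sequence}) inside the valid plan $\tilde{\plan}$ and prove by induction on the position $j$ that the state reached after the $j$-th action of the block satisfies $\unlock = \actionName$, for every $j$ strictly before the closing lock. The base case holds because $\action_{unlock}$ writes $\tuple{\unlock/\actionName}$. For the inductive step, assume $\unlock = \actionName$ in the current state and inspect which actions can be applicable there: an action of $\actions_{set,\action'}$ with $\action' \ne \action$ requires $\tuple{\unlock/\action'_{name}}$ and is inapplicable; any unlock action requires $\tuple{\unlock/\noOP}$ and is inapplicable; and a lock action $\action'_{lock}$ with $\action' \ne \action$ requires $\tuple{\unlock/\action'_{name}}$ and is inapplicable. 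Hence the only applicable actions are those of $\actions_{set,\action}$, which preserve $\unlock = \actionName$, or $\action_{lock}$ for the same $\action$, which terminates the block by resetting $\unlock$ to $\noOP$.

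From this invariant the lemma follows at once: every action strictly between the opening $\action_{unlock}$ and the matching $\action_{lock}$ is applied in a state with $\unlock = \actionName$, so it must lie in $\actions_{set,\action}$; no foreign action $\tilde{\action}' \notin \actions_{set,\action} \cup \{\action_{unlock}, \action_{lock}\}$ can occur, which is exactly the atomicity claimed. As a byproduct the same case analysis rules out nesting, since a second unlock cannot fire while the lock is held.

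I expect the proof to be essentially mechanical; all the content sits in the lock discipline encoded by the $\unlock$ preconditions and effects. The one point that genuinely demands care is the exhaustiveness of the case analysis over the action classes of $\tilde{\actions}$, combined with the observation that the single-effect actions of $\actions_{set,\action}$ never write $\unlock$ — it is precisely this fact that keeps the invariant alive across each intermediate step and lets the induction close.
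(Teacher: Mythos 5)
Your proposal is correct and follows essentially the same route as the paper's proof: both arguments rest on the mutual-exclusion discipline imposed by the $\unlock$ variable, whose pairwise-mutex values force every action between an $\action_{unlock}$ and the matching $\action_{lock}$ to require $\tuple{\unlock/\actionName}$ and hence to belong to $\actions_{set,\action}$. The paper organizes this as a four-way case analysis (initial state, sequence initiation, during execution, termination) whereas you phrase it as an invariant maintained by induction on position within the block, but the substance is identical.
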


\begin{proof}
	The proof is by case analysis. 
	Given an OSP task {\footnotesize $\ptask$}, let {\footnotesize $\tilde{\ptask}$} be the {unit-effect-compilation} of {\footnotesize $\ptask$}, where
	{\footnotesize $\actions_{set,{\action_{1}}}\dots\actions_{set,{\action_{n}}}$} are single-effect action sets created from the original action {\footnotesize $\action_{i} \in \actions$} in {\footnotesize $\ptask$}, respectively. Let {\footnotesize $\tilde{\plan} = \tuple{\tilde{\plan}_{\action_{1}}, \tilde{\plan}_{\action_{2}}, \dots ,\tilde{\plan}_{\action_{n}}}$} a valid plan for an OSP task {\footnotesize $\tilde{\ptask}$}, we show that,  through fact and action mutex relationships, the auxiliary control structure in task {\footnotesize $\tilde{\ptask}$} ensures the  atomic execution of action sequences in each step of plan execution. In order to complete the proof we have to go through the following cases: {\bf Case 1} in the initial state {\footnotesize $\init$}, {\bf Case 2} at sequence initiation, {\bf Case 3} during sequence execution, and, {\bf Case 4} at sequence termination.
	By the construction of {\footnotesize $\tilde{\ptask}$} the following holds;
	\begin{enumerate}
		\item For any {\footnotesize $\tilde{\action}\in\tilde{\actions}$} the variable {\footnotesize $\unlock\in\variables{\pre(\tilde{\action})}$} with a domain {\footnotesize $\domain(\unlock) = \{\actionName \mid \action \in \actions\} \cup \{\noOP\}$}, the facts {\footnotesize $\{\tuple{\unlock/\actionName}\}\cup \{\noOP\} \mid \action \in \actions$} are pairwise mutex, hence any two actions {\footnotesize $\tilde{\action}, \tilde{\action}' \in \tilde{\actions}$} such that {\footnotesize $\assign{\unlock}{p} \in \pre(\tilde{\action}), \assign{\unlock}{{p'}} \in \pre(\tilde{\action}')$} where {\footnotesize $p \neq p'$} mutex as well. 
		\item For any action set {\footnotesize $\actions_{set,{\action}}\in\tilde{\actions}$}, and for any action {\footnotesize $\tilde{\action}\in\actions_{set,o}$} it holds that {\footnotesize $\assign{\unlock}{\actionName} \in \pre(\tilde{\action})$}.
		\item For any action set {\footnotesize $\actions_{set,{\action}}\in\tilde{\actions}$}, there is a unique action {\footnotesize  $\action_{lock}$} with {\footnotesize $\assign{\unlock}{\actionName} \in \pre(\action_{lock})$}, and a unique action {\footnotesize  $\action_{unlock}$} with {\footnotesize $\assign{\unlock}{\noOP} \in \pre(\action_{unlock})$}.			
	\end{enumerate}

	\defnotate{Case 1}. By the construction of {\footnotesize  $\tilde{\ptask}$}, in the initial state {\footnotesize  $\init$} the fact {\footnotesize  $\tuple{\unlock/\noOP}$} holds. Only actions {\footnotesize  $\action_{unlock}\in\actions_{unlock}$} contain the predicate {\footnotesize  $\tuple{\unlock/\noOP}$} in their precondition list, hence, only actions {\footnotesize  $\action_{unlock}\in\actions_{unlock}$} are applicable. Once an action {\footnotesize  $\action_{unlock}\in\actions_{unlock}$} applied an appropriate predicate {\footnotesize  $\{\tuple{\unlock/\actionName}\}$} is achieved, since the fact {\footnotesize  $\tuple{\unlock/\noOP}$} does not hold anymore no other sequence can be initiated.
	
	\defnotate{Case 2}. Similar to Case 1, a new sequence can be initiated only when the fact {\footnotesize  $\tuple{\unlock/\noOP}$} holds. Putting (1) and (2) together, then while {\footnotesize  $\tuple{\unlock/\noOP}$} holds, no other action {\footnotesize  $\tilde{{\action}}'$} can be applied such that {\footnotesize  $\tilde{{\action}}' \notin \actions_{\unlock}$}. Once an action {\footnotesize  $\action_{unlock}\in\actions_{unlock}$} applied an appropriate predicate {\footnotesize  $\{\tuple{\unlock/\actionName}\}$} is achieved, since the fact {\footnotesize  $\tuple{\unlock/\noOP}$} does not hold anymore no other sequence can be initiated.
	
	\defnotate{Case 3}. Assuming that every action {\footnotesize  $\action\in\actions$} in the original task {\footnotesize  $\ptask$} is unique, putting (1), (2) and (3) together, then as long as {\footnotesize  $\tuple{\unlock/\actionName}$} holds, no other action {\footnotesize  $\tilde{{\action}}'$} can be applied during sequence execution, such that {\footnotesize  $\tilde{{\action}}' \notin \actions_{set,{\action}}\cup\{\action_{lock}\} \mid \action \ne \action'$}. This proves \defnotate{Case 3}.
	
	\defnotate{Case 4}. At sequence termination. By the construction when {\footnotesize  $\tuple{\unlock/\actionName}$} holds, new assignment to variable {\footnotesize  $\unlock$} is applicable only with {\footnotesize  $\action_{lock}$} action, which finalizes {\footnotesize  $\tilde{\plan}_{o}$}, and achieves {\footnotesize  $\tuple{\unlock/\noOP}$}.
	
	As shown, the Lemma holds in all cases which implies that any action sequence {\footnotesize  $\tilde{\plan}_{o}$} in plan {\footnotesize  $\tilde{\plan}$} is \defnotate{atomic action sequence} such that {\footnotesize  $\nexists \tilde{{\action'}} \mid \tilde{{\action}}'  \in \tilde{\plan}_{o}, \tilde{{\action}}' \notin \actions_{set,{\action}}\cup \{\action_{unlock},\action_{lock}\}, \action \ne \action'$}.

\end{proof}

\begin{definition}
	Let {\footnotesize  $\ptask  =\langle \vars,\initstate,\valuefunc;\actions,\costfunc,\budget\rangle$} be an OSP task and {\footnotesize  $\tilde{\ptask} = \langle \tilde{\vars},\tilde{\initstate},\tilde{\valuefunc};\tilde{\actions},\tilde{\costfunc},\tilde{\budget} \rangle$}, unit-effect-compilation of {\footnotesize  $\ptask$}. The \defnotate{expanding function} {\footnotesize  $\psi : \states \mapsto \tilde{\states}$} is defined as 
	{\footnotesize  $\psi(\state) = \state \cup \{\tuple{\unlock/\noOP} \} \cup  
	\{\tuple{y/0} \mid y \in \uniqNetNegSet\}.$}
\end{definition}

\begin{lemma}
	\label{l:operators_sequence}
	Given an OSP task  {\footnotesize  $\ptask  =\langle \vars,\initstate,\valuefunc;\actions,\costfunc,\budget\rangle$}, for each action {\footnotesize  $\action \in \actions$} and each state {\footnotesize  $\state \in \states$} in which {\footnotesize  $\action$} is applicable, there is an action sequence {\footnotesize  $\tilde{\plan}_{o}$} in task {\footnotesize  $\tilde{\ptask}$}, such that:
	\begin{enumerate*}
		\item {\footnotesize  $\tilde{\plan}_{o}$} is applicable  in {\footnotesize  $\psi(\state)$},
		\item {\footnotesize  $\psi(\state)\applied{{\tilde{\plan}_{o}}}[\var] = \state\applied{\action}$},
		\item {\footnotesize  $\costfunc(\action) = \costfunc(\tilde{\plan}_{o})$}, and
		\item {\footnotesize  $\valuefunc_{\state}(\action) = \valuefunc_{{\psi(\state)}}(\tilde{\plan}_{o})$}.
	\end{enumerate*}
	
\end{lemma}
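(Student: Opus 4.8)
The plan is to prove the lemma by an explicit construction: for the fixed $\action\in\actions$ and state $\state$ in which $\action$ is applicable, I would build a concrete witnessing sequence $\tilde{\plan}_o$ of the shape prescribed by Definition~\ref{def:sequence}, namely $\tilde{\plan}_o=\langle\action_{unlock},\dots,\action_{lock}\rangle$, and then verify the four claims one at a time. For every effect $\tuple{\var/e}\in\eff(\action)$ write $p=\state[\var]$ for the current value of $\var$; since $\action$ is applicable in $\state$ we have $p\in\preSet_\var(\action)$, so all the single-effect actions indexed by this $p\to e$ are present in $\actions_{set,o}$ by the construction in Definition~\ref{def:unconstrainedPreCompOnFly}. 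I would order the middle of $\tilde{\plan}_o$ as: first the verify action $\action^{verify}_{p\to e}$ for every net-negative effect (those with $\valuefunc_\var(e)-\valuefunc_\var(p)\le 0$) together with the verifyNo action $\action^{verifyNo}_{p\to e}$ for every net-positive effect (those with $\valuefunc_\var(e)-\valuefunc_\var(p)>0$), and then the positive action $\action^+_{p\to e}$ for every net-positive effect.

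For claim~1 (applicability) I would walk along $\psi(\state)$ step by step. Because $\psi(\state)$ carries $\tuple{\unlock/\noOP}$ and agrees with $\state$ on $\vars$, the action $\action_{unlock}$ is applicable and installs $\tuple{\unlock/\actionName}$. The decisive structural fact is that the effects of $\action$ lie on pairwise-distinct variables, so firing a $\action^{verify}$ on one variable never disturbs the precondition $\tuple{\var/p}$ of any verify, verifyNo, or positive action on another variable; hence the verify and verifyNo actions remain applicable regardless of their relative order, and together they set every $\tuple{\negEffectSatisfied/1}$ with $\tuple{\var/e}\in\eff(\action)$. Only once all these $\negEffectSatisfied$ predicates equal $1$ do the enlarged preconditions of the positive actions $\action^+_{p\to e}$ become satisfied, and each such action writes only its own variable, so the remaining positive actions still see $\tuple{\var/p}$. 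Finally $\action_{lock}$ fires, since by then every effect value of $\eff(\action)$ holds, $\tuple{\unlock/\actionName}$ holds, and all relevant $\negEffectSatisfied$ equal $1$. This ordering argument — that the $\negEffectSatisfied$ preconditions of the positive actions force every net-negative effect to precede every net-positive one, yet are always simultaneously achievable — is the main obstacle and the heart of the construction.

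For claims~2 and~3 the work is routine given the above. After $\tilde{\plan}_o$, each $\var\in\variables{\eff(\action)}$ holds $e=\eff(\action)[\var]$ (achieved by $\action^{verify}$ when net-negative and by $\action^+$ when net-positive), each variable outside $\variables{\eff(\action)}$ is untouched, and $\action_{lock}$ restores $\tuple{\unlock/\noOP}$ and resets every $\negEffectSatisfied$ to $0$; thus $\psi(\state)\applied{\tilde{\plan}_o}$ agrees with $\state\applied{\action}$ on all of $\vars$ (indeed it equals $\psi(\state\applied{\action})$), which is claim~2. For claim~3, $\tilde{\costfunc}(\action_{unlock})=\costfunc(\action)$ while every single-effect action and $\action_{lock}$ has cost $0$, so $\costfunc(\tilde{\plan}_o)=\costfunc(\action)$.

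For claim~4 I would invoke additivity of $\valuefunc$ together with $\tilde{\valuefunc}=\valuefunc$ and the fact that the control variables $\unlock$ and all $\negEffectSatisfied$ carry zero utility. The net utility of a sequence is the sum of its per-step net utilities, which telescopes to the difference of the endpoint utilities, so $\valuefunc_{\psi(\state)}(\tilde{\plan}_o)=\tilde{\valuefunc}(\psi(\state)\applied{\tilde{\plan}_o})-\tilde{\valuefunc}(\psi(\state))=\valuefunc(\state\applied{\action})-\valuefunc(\state)$, where the last equality uses claim~2 and the fact that $\psi$ only adjoins zero-utility facts. By Definition~\ref{def:actionNetValueNoPre} this final difference is exactly $\valuefunc_\state(\action)$, giving claim~4. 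Note that this telescoping also explains why the intermediate reordering of positive and negative effects enforced in the applicability step is harmless for value: only the endpoints matter, so rearranging the order in which the net change is accumulated cannot alter the sequence's net utility.
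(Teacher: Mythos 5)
Your proof is correct and follows the same overall strategy as the paper's: exhibit the witnessing sequence {\footnotesize $\tilde{\plan}_{o} = \tuple{\action_{unlock},\dots,\action_{lock}}$} explicitly and verify the four claims one by one, with claims (2) and (3) handled identically. Where you differ is in the level of rigor at the two delicate points. For claim (1) the paper only checks that {\footnotesize $\pre(\action_{unlock})=\pre(\action)\cup\{\tuple{\unlock/\noOP}\}$} holds in {\footnotesize $\psi(\state)$} and then asserts that ``the precondition for the entire sequence'' is satisfied; you actually schedule the middle of the sequence (all {\footnotesize $\action^{verify}_{p\to e}$} and {\footnotesize $\action^{verifyNo}_{p\to e}$} before any {\footnotesize $\action^{+}_{p\to e}$}), justify that the single-effect actions do not clobber each other's preconditions because they write distinct variables, and show that the {\footnotesize $\tuple{\negEffectSatisfied/1}$} preconditions of the positive actions are then simultaneously met --- the ordering obstacle the paper's proof does not address. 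For claim (4) the paper appeals to Lemma~\ref{l:separated} (atomicity) to argue that only utility-carrying facts from {\footnotesize $\eff(\action)$} can be touched, whereas you give a self-contained telescoping argument: per-step net utilities sum to the endpoint utility difference, the control facts carry zero utility, and by claim (2) that difference is exactly {\footnotesize $\valuefunc_{\state}(\action)$} under Definition~\ref{def:actionNetValueNoPre}. Your version is more elementary (it does not need Lemma~\ref{l:separated} at all for this step) and also cleanly explains why the internal ordering of positive and negative effects is irrelevant to the final value; the paper's version buys a uniform appeal to the atomicity machinery it has already built. Both are valid; yours closes gaps the paper leaves open.
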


\begin{proof}
	\begin{description}
		\item [(1)] Each action sequence {\footnotesize  $\tilde{\plan}_{o}$} starts with the action {\footnotesize  $\action_{unlock}$} that allows for the execution of action sequence from action set {\footnotesize  $\actions_{set,o}$}. By the construction of {\footnotesize  $\action_{unlock}$}, applying {\footnotesize  $\action_{unlock}$} makes applicable only single-effect actions from action set {\footnotesize  $\actions_{set,o}$} while all other single-effect actions that belong to a different action sets  {\footnotesize  $\tilde{\action} \in \actions_{set,o'}$}, such that {\footnotesize  $\action \ne \action'$}, are locked. Since {\footnotesize  $\psi(\state) = \state \cup \{\tuple{\unlock/\noOP} \} \cup  
		\{\tuple{y/0} \mid y \in \uniqNetNegSet\}$} and the preconditions for {\footnotesize  $\action_{unlock}$} are defined to be {\footnotesize  $\pre(\action_{unlock}) =  \pre(\action) \cup \{\tuple{\unlock/\noOP} \}$}, the precondition for the entire sequence {\footnotesize  $\tilde{\plan}_{o}$} are applicable and {\footnotesize  $\tilde{\plan}_{o}$} is applicable in {\footnotesize  $\psi(\state)$}.

		\item [(2)] Each action sequence {\footnotesize  $\tilde{\plan}_{o}$} ends with the control action {\footnotesize  $\action_{lock}$} which verifies that  {\footnotesize  $\eff(\action)$} are achieved, and finalize action sequence {\footnotesize  $\tilde{\plan}_{o}$}.  By the construction of {\footnotesize  $\action_{lock}$, $\pre(\action_{lock})=  \eff(\action) \cup \{\tuple{\unlock/\actionName}\}$} and {\footnotesize  $\eff(\action_{lock}) =  \{\tuple{\unlock/\noOP}\} \bigcup_{\negEffectSatisfied \in \uniqNetNegSet \mid \tuple{\var/e} \in \eff(\action)} \{\tuple{\negEffectSatisfied/0}$} hence, when {\footnotesize  $\action_{lock}$} is applied, action sequence {\footnotesize  $\tilde{\plan}_{o}$} is terminated, and the following holds; {\footnotesize  $\psi(\state)\applied{{\tilde{\plan}_{o}}} = \eff(\action) \cup \{\tuple{\unlock/\noOP}\} \cup  
		\{\tuple{y/0} \mid y \in \uniqNetNegSet\}$} and, {\footnotesize  $\psi(\state)\applied{{\tilde{\plan}_{o}}}[\var] = \state\applied{\action}$}.
		
		\item [(3)] By the construction of {\footnotesize  $\tilde{\ptask}$}, all the actions {\footnotesize  $\tilde{\action} \in \tilde{\actions}$} are with zero cost except {\footnotesize  $\actions_{unlock}$} actions which carry the cost of the original action defined to be {\footnotesize  $\tilde{\costfunc}(\action_{unlock}) = \costfunc(\action)$}. Each action sequence {\footnotesize  $\tilde{\plan}_{o}$} defined to start with a unique action {\footnotesize  $\action_{unlock}$}. Hence, {\footnotesize  $\costfunc(\action) = \tilde{\costfunc}(\tilde{\plan_{o}})$}
		\item [(4)] By Definition~\ref{def:actionNetValueNoPre} of the net utility and the construction of {\footnotesize  $\action_{lock}$} it is easy to see that {\footnotesize  $\valuefunc_{\state}(\action) = \valuefunc({\action_{lock}})$}. By Lemma~\ref{l:separated}, any actions sequence {\footnotesize  $\tilde{\plan}_{o}$} in plan {\footnotesize  $\tilde{\plan}$} is {\bf atomic} in the sense that {\footnotesize  $\nexists \tilde{{\action'}} \mid \tilde{{\action}}'  \in \tilde{\plan}_{o}, \tilde{{\action}}' \notin \actions_{set,{\action}}\cup \{\action_{unlock},\action_{lock}\}, \action \ne \action'$}. Hence, by the construction of {\footnotesize  $\actions_{set,o}$}, the only utility carrying facts {\footnotesize  $\tuple{\var/e}$} that can be achieved with sequence {\footnotesize  $\tilde{\plan}_{o}$} are from the effect list of the original action {\footnotesize  $\action$}, such that {\footnotesize  $\tuple{\var/e} \in \eff(\action)$}. Since any action sequence {\footnotesize  $\tilde{\plan}_{o}$} must start with an action {\footnotesize  $\action_{unlock}$} and end with an action {\footnotesize  $\action_{lock}$} (as provided at (1) and (2) in this proof)  we can infer that {\footnotesize  $\valuefunc(\action_{lock}) = \valuefunc_{{\psi(\state)}}(\tilde{\plan}_{o})$}. Hence, we have {\footnotesize  $\valuefunc_{\state}(\action) = \valuefunc_{{\psi(\state)}}(\tilde{\plan}_{o})$}.   
	\end{description}
\end{proof}

\begin{lemma}
	\label{l:optimal_separated}
	Let {\footnotesize  $\ptask$} be an OSP task and {\footnotesize  $\tilde{\ptask}$} be the respective unit-effect-compilation of {\footnotesize  $\ptask$}. Any non-empty optimal tail gaining plan {\footnotesize  $\tilde{\plan}$} for {\footnotesize  $\tilde{\ptask}$} ends with a complete action sequence {\footnotesize  $\{\tilde{\plan_{\action}} \mid \action \in \actions\}$}. 
\end{lemma}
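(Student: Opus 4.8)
The plan is to combine the structural decomposition from Lemma~\ref{l:separated} with a short value/cost exchange argument, using Theorem~\ref{l:timing} to locate the tail value gain.

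First I would invoke Lemma~\ref{l:separated}: a valid plan $\tilde{\plan}$ for $\tilde{\ptask}$ is a concatenation of atomic sequences $\tilde{\plan}_{\action}$, and since a new sequence can start only while $\tuple{\unlock/\noOP}$ holds, a fact restored solely by $\action_{lock}$, every sequence except possibly the last must already terminate with $\action_{lock}$ and hence be complete. So it remains only to show that the final sequence of an optimal tail-gaining plan is complete. I would then read off from Definition~\ref{def:unconstrainedPreCompOnFly} that the sole actions of $\tilde{\ptask}$ with strictly positive net utility are the $\action^+_{p\to e}$ actions: $\action_{unlock}$, $\action_{lock}$ and $\action^{verifyNo}_{p\to e}$ move only the zero-valued control predicates $\unlock$ and $\uniqNetNegSet$, and each $\action^{verify}_{p\to e}$ realizes a net non-positive effect; since $\tilde{\valuefunc}=\valuefunc$ ignores the control predicates, every value gain is carried by an $\action^+$. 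Applying Theorem~\ref{l:timing} to $\tilde{\ptask}$, the tail-gaining optimal plan may be taken to end at a positive-net-utility action, which must therefore be an $\action^+$ lying in the final sequence.

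Next I would upgrade ``contains an $\action^+$'' to ``is a complete sequence''. The key accounting step is that applying even a single $\action^+_{p\to e}$ requires every $\tuple{\negEffectSatisfied/1}$ for $\var$ with $\tuple{\var/e}\in\eff(\action)$ to hold; for a net-negative effect $e'$ the predicate $y^{\var}_{e'}=1$ can only have been produced by the matching $\action^{verify}_{p\to e'}$, whose application also establishes $\tuple{\var/e'}$. Thus once any $\action^+$ has fired, every net-negative effect of $\action$ is already realized. If some net-positive effect of $\action$ were still missing, I would append the corresponding $\action^+$ actions (each applicable, since the required predicates are set and the relevant variables are untouched, and each of cost $0$ and strictly positive utility), contradicting optimality; hence all effects $\eff(\action)$ hold. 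At that point all preconditions of $\action_{lock}$ are met, so appending it, a cost-$0$ and net-utility-$0$ action, closes the sequence without altering value or cost and exhibits the trailing complete sequence $\tilde{\plan}_{\action}$, which by Lemma~\ref{l:operators_sequence} implements the original action $\action$.

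The step I expect to be the main obstacle is this last accounting: I must argue precisely that the presence of one fired $\action^+$ certifies that all net-negative effects of $\action$ have been applied in full, not merely that their bookkeeping predicates are set, so that $\action_{lock}$ is genuinely applicable. This is exactly the reason net-negative effects are routed through $\action^{verify}$ (which both applies the effect and sets $y^{\var}_{e'}$) rather than through $\action^{verifyNo}$. A secondary point to handle is that the value-neutral, cost-free $\action_{lock}$ may be appended without loss of generality, so that identifying the tail with a complete $\tilde{\plan}_{\action}$ is compatible with ``any optimal tail-gaining plan''.
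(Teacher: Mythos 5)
Your proof is correct and rests on the same central mechanism as the paper's: the $\action^+_{p\to e}$ actions are the only utility-gaining actions in $\tilde{\actions}$, they are gated behind all of the $\tuple{\negEffectSatisfied/1}$ predicates, and for a net-negative realization that predicate can only be produced by the $\action^{verify}_{p\to e}$ action that also realizes the corresponding effect. Where you differ is in how the argument is closed. The paper reasons about the shape of the cumulative-utility profile inside one atomic sequence: negative/neutral collections strictly precede positive ones, so within a sequence the utility first falls and then rises and is maximized only at the sequence's last positive action, from which it reads off that an optimal tail-gaining plan cannot stop mid-sequence. You instead run an explicit exchange argument: once one $\action^+$ has fired all the $\negEffectSatisfied$ predicates are set, every still-missing effect of $\action$ admits an applicable, zero-cost, strictly positive $\action^+_{p\to e}$ (the corresponding predicate must have been set by an $\action^{verifyNo}$, which exists only when the gain from the unchanged current value is positive), so an incomplete final sequence contradicts optimality, after which the cost- and value-neutral $\action_{lock}$ is appended. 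Your version is somewhat tighter on the two points the paper glosses over --- why \emph{all} positive effects must already be present rather than merely why positives come last, and the normalization by $\action_{lock}$, which the paper only acknowledges parenthetically (``before locking the sequence''). The paper's profile-shape argument buys a slightly stronger conclusion that the subsequent equivalence theorem actually uses, namely that the accumulated utility is sub-optimal at \emph{every} interior point of a sequence; your accounting yields this too, but it is worth stating explicitly if the lemma is to be cited in that stronger form.
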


\begin{proof}
	By the construction of {\footnotesize  $\tilde{\ptask}$}, for each sequence {\footnotesize  $\{\tilde{\plan_{\action}} \mid \action \in \actions\}$}, it holds that:
	\begin{enumerate}
		\item Actions {\footnotesize  $\action_{unlock}$} and {\footnotesize  $\action_{lock}$} are not utility carrying facts so they are not changing utility during plan execution.
		\item The cost for the entire sequence is paid at the first action executed in {\footnotesize  $\tilde{\plan_{\action}}$}, the action {\footnotesize  $\action_{unlock}$}, while any other action {\footnotesize  $\tilde{\action} \in \tilde{\plan_{\action}}\mid \tilde{\action} \ne \action_{unlock}$} it holds that {\footnotesize  $\tilde{\costfunc}(\tilde{\action}) = 0$}.
		\item Net positive actions are applied last in the sequence after all net negative and neutral utility actions been applied due to the auxiliary control structure where each net positive action is preconditioned with achieving all control predicates {\footnotesize  $\negEffectSatisfied$, $\bigcup_{\negEffectSatisfied \in \uniqNetNegSet \mid \tuple{\var/e} \in \eff(\action)} \{\tuple{\negEffectSatisfied/1}\}.$}
		
		Each effect {\footnotesize $e \in \eff(\action)$} of the original action {\footnotesize $\action$} is verified to be achieved with net negative/neutral utility action or verified that cannot be achieved with such action, before the net positive utility actions become applicable and may achieving that effect. For every valid net negative utility achievement of an effect {\footnotesize $e$}, such that {\footnotesize $\valuefunc_{\var}(e) - \valuefunc_{\var}(p) \leq 0$} the action {\footnotesize  $\action^{verify}_{p\to e}$} is built and verifies that the effect {\footnotesize $e$} has been achieved by applying net negative/neutral utility single-effect action, this verification is by achieving the predicate {\footnotesize  ${\tuple{\negEffectSatisfied/1}}$}. For every valid positive net utility achievement of an effect {\footnotesize  $e$}, such that {\footnotesize  $\valuefunc_{\var}(e) - \valuefunc_{\var}(p) > 0$} the action {\footnotesize  $\action^{verifyNo}_{p\to e}$} is built and verifies that the effect {\footnotesize $e$} can't be achieved by applying net negative/neutral action, this verification is by achieving the predicate {\footnotesize  ${\tuple{\negEffectSatisfied/1}}$}. Only after all the effects list verified with the actions {\footnotesize  $\action^{verify}_{p\to e}$} or {\footnotesize  $\action^{verifyNo}_{p\to e}$}, the precondition {\footnotesize $\bigcup_{\negEffectSatisfied \in \uniqNetNegSet \mid \tuple{\var/e} \in \eff(\action)} \{\tuple{\negEffectSatisfied/1}\}$} for the net positive utility single-effect action holds, and net positive utility single-effect actions become applicable. 

	\end{enumerate}
	From (1), (2), and (3) it is easy to see that the auxiliary control structure within the action sequence {\footnotesize  $\tilde{\plan_{\action}}$} ensures that during action sequence execution first collected facts such that carry negative utility and then collected those carry positive utility, hence, the optimal utility will be achieved with the final action applied (before locking the sequence with action {\footnotesize  $\action_{lock}$}), at any point before the final action applied the accumulated utility is sub-optimal.
\end{proof}

\begin{theorem}
	For any optimal, tail gaining plan {\footnotesize $\plan$} for {\footnotesize $\ptask$} with cost of {\footnotesize $\costfunc(\plan) = b$} and utility {\footnotesize $\valuefunc(\plan) = \alpha$}, there is an optimal, tail gaining plan {\footnotesize $\tilde{\plan}$} for {\footnotesize $\tilde{\ptask}$}, such that {\footnotesize $\costfunc(\plan) = \costfunc(\tilde{\plan}) = b$} and {\footnotesize $\valuefunc(\plan) = \tilde{\valuefunc}(\tilde{\plan}) = \alpha$}, and vice verse. Furthermore, for any plan {\footnotesize $\tilde{\plan}$ for $\tilde{\ptask}$}, the corresponding plan {\footnotesize $\plan$} for {\footnotesize $\ptask$} can be restored.  

\end{theorem}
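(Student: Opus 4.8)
The plan is to prove the two directions of the equivalence separately and then exhibit the restoration map, reducing everything to the three structural lemmas already established. The core idea is that Lemma~\ref{l:operators_sequence} gives a cost-, state-, and net-utility-preserving correspondence between a single original action $\action$ applied in $\state$ and a complete action sequence $\tilde{\plan}_{o}$ applied in $\psi(\state)$, while Lemma~\ref{l:separated} guarantees that every valid plan for $\tilde{\ptask}$ is a concatenation of such atomic sequences. Thus the two tasks simulate each other action-by-sequence.

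First, for the direction $\ptask \Rightarrow \tilde{\ptask}$: given an optimal tail-gaining plan $\plan = \tuple{\action_{1}, \dots, \action_{m}}$, I would let $\state_{i}$ denote the state reached after $\tuple{\action_{1}, \dots, \action_{i}}$ and invoke Lemma~\ref{l:operators_sequence} at each $\state_{i-1}$ to obtain a sequence $\tilde{\plan}_{\action_{i}}$ that is applicable in $\psi(\state_{i-1})$ and satisfies $\psi(\state_{i-1})\applied{\tilde{\plan}_{\action_{i}}}[\var] = \state_{i-1}\applied{\action_{i}} = \state_{i}$ on the original variables, while resetting all auxiliary facts ($\unlock$ to $\noOP$ and every $\negEffectSatisfied$ to $0$). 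Because each sequence both begins and ends in a state of the form $\psi(\cdot)$, the concatenation $\tilde{\plan} = \tuple{\tilde{\plan}_{\action_{1}}, \dots, \tilde{\plan}_{\action_{m}}}$ chains cleanly: each sequence is applicable in the image of the state left by the previous one. Parts (3) and (4) of Lemma~\ref{l:operators_sequence} then give $\costfunc(\tilde{\plan}) = \sum_{i} \costfunc(\action_{i}) = b \le \tilde{\budget}$ and, since $\tilde{\valuefunc} = \valuefunc$ agrees on the original variables and the auxiliary variables carry no utility, $\tilde{\valuefunc}(\psi(\init)\applied{\tilde{\plan}}) = \valuefunc(\init\applied{\plan}) = \alpha$.

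Second, for the direction $\tilde{\ptask} \Rightarrow \ptask$ (which also yields the restoration map): given an optimal tail-gaining plan $\tilde{\plan}$, I would apply Lemma~\ref{l:separated} to decompose it into a concatenation of atomic sequences, and Lemma~\ref{l:optimal_separated} to ensure the trailing sequence is complete (bracketed by its $\action_{unlock}$ and $\action_{lock}$), so that $\tilde{\plan} = \tuple{\tilde{\plan}_{\action_{1}}, \dots, \tilde{\plan}_{\action_{k}}}$. The restoration map simply reads off, from each complete sequence $\tilde{\plan}_{\action_{i}}$, the unique original action $\action_{i}$ whose $\action_{unlock}$ opens it, producing $\plan = \tuple{\action_{1}, \dots, \action_{k}}$. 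Part (2) of Lemma~\ref{l:operators_sequence} makes $\plan$ applicable in $\init$ with the same intermediate states on the original variables, and parts (3) and (4) give equal cost and net utility, hence $\costfunc(\plan) = b$ and $\valuefunc(\init\applied{\plan}) = \alpha$.

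The main obstacle is establishing optimality (rather than mere feasibility) of the constructed plans, together with preservation of the tail-gaining shape. I would close this by observing that the two maps above are mutually inverse on cost--utility pairs: any plan for one task projects to a plan for the other with identical cost and utility, so if an image plan were strictly dominated by some better plan in the target task, pulling that witness back through the opposite map would contradict optimality in the source task. For the tail-gaining requirement I would lean on Lemma~\ref{l:optimal_separated}, which forces the last complete sequence to accumulate its positive utility only at its final single-effect action, matching the ``ends with a positive net utility action'' characterization of Theorem~\ref{l:timing}; this guarantees both that $\tilde{\plan}$ is tail-gaining when $\plan$ is and, via the restoration map, the converse. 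Care is also needed to confirm that an incomplete trailing sequence cannot occur in an optimal $\tilde{\plan}$---such a sequence pays the cost of its $\action_{unlock}$ yet cannot fire its net-positive action, so it can only weaken the plan, which Lemma~\ref{l:optimal_separated} excludes.
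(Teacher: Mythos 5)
Your proposal follows essentially the same route as the paper's proof: both directions are reduced to Lemma~\ref{l:operators_sequence} for the action-to-sequence correspondence, Lemmas~\ref{l:separated} and~\ref{l:optimal_separated} for the decomposition of compiled plans into complete atomic sequences, and the same pull-back contradiction argument to lift feasibility to optimality. Your treatment is somewhat more explicit than the paper's on the chaining of consecutive sequences through states of the form $\psi(\cdot)$ and on preservation of the tail-gaining property, but these are elaborations of the same argument rather than a different approach.
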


\begin{proof}
	\begin{description}
		\item [(1)] From the first direction, we  show that, for an optimal plan {\footnotesize $\plan$} for task {\footnotesize $\ptask$} under the budget of {\footnotesize $b$} and solution utility of {\footnotesize $\alpha$}, there exists an optimal plan {\footnotesize $\tilde{\plan}$} for {\footnotesize $\tilde{\ptask}$} with the same budget {\footnotesize $b$} and solution utility of {\footnotesize $\alpha$}. Let plan {\footnotesize $\plan = \tuple{\action_{1}, \action_{2}, \dots ,\action_{n}}$} be an optimal solution for task {\footnotesize $\ptask$} such that applicable at {\footnotesize $\init$}, with {\footnotesize $\valuefunc(\init\applied{\plan}) = \alpha$} and {\footnotesize $\costfunc(\plan) = b$}. By Lemma~\ref{l:operators_sequence} we can replace each action {\footnotesize $\action_{i} \in \plan$} with the equivalent actions sequence {\footnotesize $\tilde{\plan}_{\action_{i}}$} and get an equivalent plan {\footnotesize $\tilde{\plan} = \tuple{\tilde{\plan}_{\action_{1}}, \tilde{\plan}_{\action_{2}}, \dots ,\tilde{\plan}_{\action_{n}}}$} in {\footnotesize $\tilde{\ptask}$} , where,
		\begin{itemize}
			\item for any action {\footnotesize $\action_i \in \plan$} in the original plan and the equivalent action sequence {\footnotesize $\tilde{\plan}_{\action_{i}} \in \tilde{\plan}$} in {\footnotesize $\tilde{\plan}$} holds that {\footnotesize $\costfunc(\action_{i}) = \costfunc(\tilde{\plan}_{o_i})$}, hence by accumulating all actions costs in {\footnotesize $\plan$} and accordingly action sequence cost in {\footnotesize $\tilde{\plan}$} we get {\footnotesize $\costfunc(\plan) = \costfunc(\tilde{\plan}) = b$}.
			\item {\footnotesize $\tilde{\plan}$} is applicable at {\footnotesize $\psi(\init)$} and ends with state {\footnotesize $\tilde{\state} =\psi(\init\applied{\plan})$} such that {\footnotesize $\valuefunc(\tilde{\state}) = \valuefunc(\init\applied{\plan}) = \alpha$}. 
		\end{itemize}
		Now we will show that there is no other plan {\footnotesize $\tilde{\plan}$} for {\footnotesize $\tilde{\ptask}$} that achieves utility beyond {\footnotesize $\alpha$}. By Lemma~\ref{l:separated} and Lemma~\ref{l:optimal_separated}, optimal plan for {\footnotesize $\plan$} will be reached only after completion of full sequences (or not at all) from {\footnotesize $\{\tilde{\plan_{\action}} \mid \action \in \actions\}$}, i.e. optimal plan couldn't be reached in the middle of any sequence {\footnotesize $\tilde{\plan_{\action}} \mid \action \in \actions$}. Furthermore, by Lemma~\ref{l:operators_sequence} for each sequence {\footnotesize $\tilde{\plan}_{\action_{i}}$} there is a mapping to {\footnotesize $\action \in \actions$}. 
		
		Now let assume to the contrary that there is a plan {\footnotesize $\tilde{\plan}' = \tuple{\tilde{\plan}'_{\action_{1}}, \tilde{\plan}'_{\action_{2}}, \dots ,\tilde{\plan}'_{\action_{k}}}$} for {\footnotesize $\tilde{\ptask}$} such that achieves utility {\footnotesize $\beta$} and {\footnotesize $\beta > \alpha$}. By Lemma~\ref{l:separated}, Lemma~\ref{l:operators_sequence} and Lemma~\ref{l:optimal_separated}, from the optimal plan there is a mapping to a plan in the original task {\footnotesize $\ptask$} that achieves the same utility, {\footnotesize $\beta$} under the same budget. The mapping for {\footnotesize $\tilde{\plan}'$} is  {\footnotesize $\plan' = \tuple{\action_{1}', \action_{2}', \dots ,\action_{k}'}$} with {\footnotesize $\valuefunc(\init\applied{\plan'}) = \beta$}. This, however contradicting the optimality of {\footnotesize $\plan$} for {\footnotesize $\ptask$}.
		
		\item [(2)] The proof of the other direction is rather similar. We show that, for an optimal plan {\footnotesize $\tilde{\plan}$} for the compiled task {\footnotesize $\tilde{\ptask}$} under the budget of {\footnotesize $b$} and solution utility of {\footnotesize $\alpha$}, there exists an optimal plan {\footnotesize $\plan$} for {\footnotesize $\ptask$} with the same budget {\footnotesize $b$} and solution utility of {\footnotesize $\alpha$}.  Let {\footnotesize $\tilde{\plan}$} be an optimal plan for the compiled task {\footnotesize $\tilde{\ptask}$}, then by Lemma~\ref{l:separated} and Lemma~\ref{l:optimal_separated} this optimal plan is made of full and separated sequences from {\footnotesize $\{\tilde{\plan_{\action}} \mid \action \in \actions\}$}, i.e. it is of the from, {\footnotesize $\tilde{\plan} = \tuple{\tilde{\plan}_{\action_{1}}, \tilde{\plan}_{\action_{2}}, \dots ,\tilde{\plan}_{\action_{n}}}$}. By Lemma~\ref{l:operators_sequence} (and step (1)) we can map {\footnotesize $\tilde{\plan}$} to an equivalent solution {\footnotesize $\plan = \tuple{\action_{1}, \action_{2}, \dots ,\action_{n}}$} in the original task {\footnotesize $\ptask$} such that achieves the same utility {\footnotesize $\alpha$} under the same budget {\footnotesize $b$}.
		
		Now let assume to the contrary that there is a plan {\footnotesize $\plan' = \tuple{\action'_{1}, \action'_{2}, \dots ,\action_{k}'}$} for the original task {\footnotesize $\ptask$} such that achieves utility {\footnotesize $\beta$} and {\footnotesize $\beta > \alpha$}. By Lemma~\ref{l:separated},~\ref{l:operators_sequence} and~\ref{l:optimal_separated}, there is a mapping from the optimal plan {\footnotesize $\plan'$} in the original task to a plan {\footnotesize $\tilde{\plan}'$} in the compiled task {\footnotesize $\tilde{\ptask}$} that achieves the same utility, {\footnotesize $\beta$} under the same budget. The mapping for {\footnotesize $\plan'$} is  {\footnotesize $\tilde{\plan}' = \tuple{\tilde{\plan}'_{\action_{1}}, \tilde{\plan}'_{\action_{2}}, \dots ,\tilde{\plan}'_{\action_{k}}}$} with {\footnotesize $\valuefunc(\init\applied{\tilde{\plan}'}) = \beta$}. This, however contradicting the optimality of {\footnotesize $\tilde{\plan}$} for {\footnotesize $\tilde{\ptask}$}.
	\end{description}
\end{proof}

\subsection{Recognition of Real-Time Tactical Decisions}
\label{eval:unitsplit}
To address real-world scenarios where the polarity of an action can be detected only with interaction with the environment
we have incorporated unit effect action split into the pre-processing phase of the selective action split. The unit-effect action split can be activated just for a subset of action that fit some structural criteria. 

In order to obtain the optimal split decision criteria we should examine the structure of the affected domains and actions to obtain the structure related criteria to activate unit-effect split. Since there is a trade off between the increase in task size obtain with each approach to split, we compare the expected increase in the size of the task obtained with each approach and set the critical value for the decision with relation to the increase in size task caused by each split method. This comparison is easily obtained during the pre-processing phase in the selective action split procedure, or in the translation of PDDL representation to SAS+ representation.

\begin{figure}[t]
		{ \centering
			\scriptsize
			\begin{tabbing}
				\underline{{\bf Selective-Action-Split}} ($\ptask = \langle \vars,\initstate,\valuefunc;\actions,\costfunc,\budget\rangle$)\\
				\;\;\= mutex-Inference($\ptask$)\\
				\;\;\= $\actions_{new} = \emptyset$\\
				\> {\bf for each} \= $\action \in \actions$;\\
				\>\;\;\;\=\mbox{\em \textcolor{blue}{\footnotesize // explicit net utility (for all known preconditions)}}\\
				\>\;\;\;\= $\text{ENU} := \sum_{\var\in\variables{\pre(\action)}}[{\valuefunc(\eff(\action)[\var])}-{\valuefunc(\pre(\action)[\var])}]$\\
				\>\;\;\;\= $\text{MSS := {\bf min-split-set}}(\action)$\\
				\>\;\;\;\= $\actions_{new} := \actions_{new} \cup \text{{\bf marked-or-splitted}(MSS,ENU)}$\\
				\\
				\underline{{\bf min-split-set}}($\action$)\\
				\> {\bf return} $\{\var\mid\var\in\variables{\eff(\action)\setminus\pre(\action)}, \text{{\bf refined-domain}}(\var) \neq \emptyset\}$\\
				\\        
				\underline{{\bf refined-domain}}($\var,\action$)\\
				\>\=\mbox{\em \textcolor{blue}{\footnotesize // remove variables without utility variance}}\\
				\>\=\mbox{\em \textcolor{blue}{\footnotesize // remove facts that are mutex with known preconditions}}\\
				\> {\bf return} $\{\val\mid\val\in\domain(\var),\valuefunc(\tuple{\var/\val}) \neq \valuefunc(\eff(\action)[\var]),$\\
				\>\;\;\;\;\;\;\;\;\;\;\;\;\;\;\;\= $\tuple{\var/\val}\notin\text{mutex-group}(\variables{\pre(\action)})\}$\\
				\\
				\underline{{\bf marked-or-split}}(MSS,ENU)\\
				
				\>\=\mbox{\em \textcolor{blue}{\footnotesize // max/min floating net utility (for all unknown preconditions)}}\\    
				\>\= $\text{MAXFNU} := \sum_{\var\in\text{MSS}}[{\valuefunc(\eff(\action)[\var])}-\max\limits_{\forall d \in \domain(\var)}\valuefunc(\tuple{\var/\val})]$\\        
				\>\= $\text{MINFNU} := \sum_{\var\in\text{MSS}}[{\valuefunc(\eff(\action)[\var])}-\min\limits_{\forall d \in \domain(\var)}\valuefunc(\tuple{\var/\val})]$\\
				\>\= {\bf if} MAXFNU + ENU $\leq$ 0;\\
				\>\;\;\;\;\;\;\= {\bf return} $\action$-neg \mbox{\em \textcolor{blue}{\footnotesize // mark without split}}\\
				\>\= {\bf if} MINFNU + ENU $>$ 0;\\
				\>\;\;\;\;\;\;\= {\bf return} $\action$-pos \mbox{\em \textcolor{blue}{\footnotesize // mark without split}}\\
				\> {\bf return} {\bf $\action$-classifier}(MSS, ENU)\\
				\\
				\underline{{\bf $\action$-classifier}}(MSS, ENU)\\
				any binary multivariate classification algorithm into two groups:\\
				\> 1) net positive actions, and\\
				\> 2) not net positive actions \\
			\end{tabbing}
		}
		\caption{\label{fig:Selective} Selective-Action-Split translation procedure}
\end{figure}

The pre-processing procedure to refine value-ambiguity, incorporating the unit effect split with the selective action split, is as follows;

\begin{enumerate}
	\item Perform a {\bf mutex inference} to complete precondition list as much as possible (with standard Fast Downward mechanism) to reduce the computational effort.
	\item Calculate the {\bf explicit net utility (ENU)} 
	\begin{description}
		\item [a.] Actions that carry negative or zero net utility for all their instances in the normal form can remain in their compact encoding.
		\item [b.] Actions that are {\em pure positive}, such that carry positive net utility for all their instances in the normal form can be marked as {\em goal actions} and remain in their compact encoding as well
	\end{description}
	\item Reduce remained variables with {\bf min-split-set}
	\begin{description}
		\item [a.] Variables with no utility variance (regardless the numerical utility value) are removed since [b.] they have no net utility contribution 
		\item [c.] Perform mutex inference of unknown preconditions with reveled precondition facts; remove if found.
	\end{description}
	\item If redundant preconditions left (net utility signum have not determined);
	\begin{description}
		\item [a.] Calculate the maximal and the minimal floating net utility ({\bf MAXFNU, MINFNU})
		\item [b.] If utility signum remains positive borders; mark positive action,
		\item [c.] Else, classify with {\bf binary-multivariate-classifier}. To split such actions, we first calculate the total net utility of variables with a known precondition. Suppose we got a value of {\footnotesize $x$}. Next step, just for variables with an unrestricted precondition, we divide to 2 groups, the positive group with higher total sum than {\footnotesize $-x$} and appositive group with lower (or equal) total sum than {\footnotesize $-x$}. This split can be done with any method of binary multivariate classification.
	\end{description}
\end{enumerate}

In other words, we may recognize positive net actions off-line, and perform a normal form encoding (or transition normal form encoding) only for actions that are ambiguous with regard to their net utility signum. Considering the structure of the precondition list of an action we define three versions of the selective action split for online analysis of net utility polarity as follows.
\begin{description}
	\item [{\footnotesize $S_{base}$} -] selective action split without unit effect split.
	\item [{\footnotesize $S_{blind}$} -] unit effect action split is defined to be activated for each action which is determined as candidate to split into net positive and not net positive instances.
	\item [{\footnotesize $S_{preTotal}$} -] a unit-effect action split defined to be activated if the expected number of split actions is bigger than the total number of the preconditions of actions that expected to split with the selective action split.     
\end{description}

\begin{table*}[hbt!] 
	\centering
	\setlength{\tabcolsep}{.18em}
	{\scriptsize
		\resizebox{2.1\columnwidth}{!}{%
			\begin{tabular}{l||c|c|c|c|c|c|c|c|c|c|c|c|c|c|c|c|c|c|c|c|c|c|c|c|c|c|c|c|c|c|c|c|c|c|c|c|c|c|c|c|c|c|}	
            & \multicolumn{9}{c|}{\sl $25\%$}&\multicolumn{9}{c|}{\sl  $50\%$}& \multicolumn{9}{c|}{\sl $75\%$} & \multicolumn{9}{c|}{\sl $100\%$} \\
            \cline{2-37}
				& \multicolumn{3}{c|}{\sl $S_{base}$}&\multicolumn{3}{c|}{\sl  $S_{blind}$}& \multicolumn{3}{c|}{\sl $S_{preTotal}$} & \multicolumn{3}{c|}{\sl $S_{base}$}&\multicolumn{3}{c|}{\sl  $S_{blind}$}& \multicolumn{3}{c|}{\sl $S_{preTotal}$} & \multicolumn{3}{c|}{\sl $S_{base}$}&\multicolumn{3}{c|}{\sl  $S_{blind}$}& \multicolumn{3}{c|}{\sl $S_{preTotal}$} & \multicolumn{3}{c|}{\sl $S_{base}$}&\multicolumn{3}{c|}{\sl  $S_{blind}$}& \multicolumn{3}{c|}{\sl $S_{preTotal}$} \\
           \cline{2-37}
		&	 {\sl Exp}	&	{\sl Time}	&	{\sl Sol}	&	 {\sl Exp}	&	{\sl Time}	&	{\sl Sol}	&	 {\sl Exp}	&	{\sl Time}	&	{\sl Sol}	&	 {\sl Exp}	&	{\sl Time}	&	{\sl Sol}	&	 {\sl Exp}	&	{\sl Time}	&	{\sl Sol}	&	 {\sl Exp}	&	{\sl Time}	&	{\sl Sol}	&	 {\sl Exp}	&	{\sl Time}	&	{\sl Sol}	&	 {\sl Exp}	&	{\sl Time}	&	{\sl Sol}	&	 {\sl Exp}	&	{\sl Time}	&	{\sl Sol}	&	 {\sl Exp}	&	{\sl Time}	&	{\sl Sol}	&	 {\sl Exp}	&	{\sl Time}	&	{\sl Sol}	&	 {\sl Exp}	&	{\sl Time}	&	{\sl Sol}\\	
 \hline  																																																																									
airport(30)	&	{\bf 503282}	&	261.19	&	26	&	503303	&	256.67 &	26	&	503303	&	{\bf 250.72}	&	26	&	{\bf 1519491}	&	918.91	&	21	&	1519547	&	{\bf 916.30}	&	21	&	1519547	&	918.29	&	21	&	1596024	&	652.02	&	19	&	{\bf 1595959}	&	710.06	&	19	&	1596024	&	{\bf 630.70}	&	19	&	{\bf 2202637}	&	920.12	&	19	&	2202665	&	885.72	&	19	&	2202665	&	{\bf 855.72}	&	19	\\
blocks(26)	&	5454	&	{\bf 41.34}	&	26	&	5454	&	42.25	&	26	&	5454	&	41.77	&	26	&	10436158	&	{\bf 454.81}	&	26	&	10436158	&	466.69	&	26	&	10436158	&	461.92	&	26	&	4287047	&	323.20	&	18	&	4287047	&	326.27	&	18	&	4287047	&	{\bf 306.53}	&	18	&	14029253	&	918.95	&	17	&	14029253	&	 853.38	&	17	&	14029253	&	{\bf 830.19}	&	17	\\
depot(7)	&	2338	&	{\bf 20.36}	&	7	&	2338	&	20.93	&	7	&	2338	&	21.12	&	7	&	441952	&	75.57	&	7	&	441952	&	74.05	&	7	&	441952	&	{\bf 72.11}	&	7	&	8640604	&	878.34	&	6	&	8640604	&	919.94	&	6	&	8640604	&	{\bf 832.49}	&	6	&	5431953	&	{\bf 512.92}	&	4	&	5431953	&	575.15	&	4	&	5431953	&	564.89	&	4	\\
driverlog(14)	&	27379	&	33.65	&	14	&	27379	&	30.23	&	14	&	27379	&	{\bf 31.03}	&	14	&	3347713	&	245.75	&	12	&	3347713	&	205.18	&	{\bf 13}	&	3347713	&	{\bf 198.54}	&	{\bf 13}	&	9813940	& 527.10	&	10	&	9813940	&	565.08	&	10	&	9813940	&	{\bf 524.72}	&	10	&	6197329	&	517.74	&	7	&	6197329	&	527.74	&	7	&	6197329	&	500.86	&	7	\\
freecell(43)	&	140125	&	228.36	&	43	&	140125	&	218.23 &	43	&	140125	&	{\bf 217.06}	&	43	&	{\bf 29637127}	&	2368.01	&	31	&	29637135	& 2347.79	&	31	&	29637135	&	{\bf 2322.25}	&	31	&	4865955	&	{\bf 573.96}	&	15	&	{\bf 4865941}	&	608.94	&	15	&	4865955	&	{\bf 564.54}	&	15	&	{\bf 9365088}	&	{\bf 947.30}	&	14	&	9365112	&	992.5	&	14	&	9365112	&	970.63	&	14	\\
grid(2)	&	0	&	{\bf 6.59}	&	2	&	0	&	10.78	&	2	&	0	&	10.72	&	2	&	2020	&	{\bf 6.60}	&	2	&	2020	&	12.5	&	2	&	2020	&	9.81	&	2	&	64270	&	24.76	&	2	&	64270	&	{\bf 15.25}	&	2	&	64270	&	23.21	&	2	&	1267	&	{\bf 1.77}	&	1	&	1267	&	2.32	&	1	&	1267	&	2.22	&	1	\\
gripper(7)	&	17450	&	89.25	&	5	&	{\bf 17112}	&	{\bf 87.82}	&	5	&	17450	&	87.35	&	5	&	64520	&	{\bf 17.07}	&	4	&	{\bf 64442}	&	18.56	&	4	&	64520	&	17.65	&	4	&	263519	&	32.53	&	4	&	{\bf 263282}	&	{\bf 27.86}	&	4	&	263282	&	32.44	&	4	&	{\bf 416250}	&	{\bf 40.08}	&	4	&	417436	&	49.01	&	4	&	{\bf 416250}	&	44.96	&	4	\\
logistics00(20)	&	1119070	&	65.66	&	20	&	1119070	&	62.04	&	20	&	1119070	&	{\bf 60.54}	&	20	&	18410615	&	930.74	&	16	&	18410615	&	885.44	&	16	&	18410615	&	{\bf 861.81}	&	16	&	16682657	&	792.75	&	12	&	16682657	&	822.9	&	12	&	16682657	&	{\bf 765.90}	&	12	&	688210	&	{\bf 61.39}	&	10	&	688210	&	66.11	&	10	&	688210	&	68.4	&	10	\\
logistics98(6)	&	57297	&	{\bf 14.04}	&	6	&	57297	&	15.28	&	6	&	57297	&	14.27	&	6	&	1985934	&	{\bf 114.57}	&	4	&	1985934	&	177.03	&	4	&	1985934	&	170.32	&	4	&	52155	& 9.65	&	2	&	52155	&	9.9	&	2	&	52155	&	{\bf 9.09}	&	2	&	97889	&	13.51	&	2	&	97889	&	12.66	&	2	&	97889	&	{\bf 11.46}	&	2	\\
miconic(69)	&	2108009	&	2446.34	&	65	&	{\bf 2080805}	&	2436.04	&	65	&	{\bf 2080805}	&	{\bf 2447.19}	&	65	&	31263975	&	1688.79	&	55	&	{\bf 30533261}	&	 1613.26	&	55	&	{\bf 30533261}	&	{\bf 1595.48}	&	55	&	{\bf 34526651}	&	{\bf 2244.35}	&	{\bf 48}	&	35286750	&	2420.7	&	46	&	 60318226	&	3374.71	&	{\bf 48}	&	{\bf 7188067}	&	{\bf 781.66}	&	40	&	8585558	&	890.83	&	{\bf 41}	&	8585558	&	859	&	{\bf 41}	\\
movie(30)	&	300	&	48.40	&	30	&	{\bf 270}	&	50.84	&	30	&	300	&	{\bf 43.97}	&	30	&	3150	& 96.50	&	30	&	{\bf 3120}	&	99.13	&	30	&	3150	&	{\bf 95.4}	&	30	&	{\bf 9690}	&	151.5	&	30	&	9780	&	{\bf 140.66}	&	30	&	9780	&	142.26	&	30	&	13650	&	{\bf 171.01}	&	30	&	13650	&	178.66	&	30	&	13650	&	169.19	&	30	\\
mprime(19)	&	0	&	{\bf 73.11}	&	19	&	0	&	76.85	&	19	&	0	&	77.7	&	19	&	6217	&	84.94	&	19	&	6217	&	88.02	&	19	&	6217	&	{\bf 83.39}	&	19	&	114171	&	115.24	&	19	&	114171	&	116.07	&	19	&	114171	&	{\bf 114.61}	&	19	&	778740	&	{\bf 194.06}	&	18	&	778740	&	214.4	&	18	&	778740	&	206.78	&	18	\\
mystery(15)	&	0	&	{\bf 88.42}	&	15	&	0	&	97.99	&	15	&	0	&	91.17	&	15	&	343	&	104.25	&	15	&	343	&	103.68	&	15	&	343	&	{\bf 99.45}	&	15	&	16440	&	116.53	&	15	&	16440	&	{\bf 109.13}	&	15	&	16440	&	113.89	&	15	&	1835459	&	284.42	&	15	&	1835459	&	279.86	&	15	&	1835459	&	271.8	&	15	\\
openstacks(7)	&	15194	&	{\bf 7.19}	&	7	&	15194	&	7.31	&	7	&	15194	&	7.23	&	7	&	979207	&	{\bf 93.76}	&	7	&	979207	&	103.59	&	7	&	979207	&	99.94	&	7	&	4522208	&	404.41	&	7	&	4522208	&	408	&	7	&	4522208	&	{\bf 394.76}	&	7	&	80113	&	{\bf 14.29}	&	5	&	80113	&	17.32	&	5	&	80113	&	17.51	&	5	\\
pipesw-nt(19)	&	{\bf 4310}	&	90.11	&	19	&	6172	&	{\bf 92.02}	&	19	&	6172	&	96.86	&	19	&	{\bf 514066}	&	{\bf 239.85}	&	19	&	849912	&	298.77	&	19	&	849912	&	282.4	&	19	&	2194830	&	789.78	&	17	&	{\bf 1323608}	&	{\bf 538.54}	&	{\bf 18}	&	2194830	&	778.85	&	17	&	{\bf 2685297}	&	664.56	&	12	&	2733865	&	{\bf 660.13}	&	12	&	2733865	&	647.48	&	12	\\
pipesw-t(13)	&	3715	&	115.61	&	13	&	{\bf 3699}	&	{\bf 102.18}	&	13	&	{\bf 3699}	&	102.50	&	13	&	{\bf 563299}	&	230.2	&	13	&	602623	&	197.63	&	13	&	602623	&	{\bf 193.60}	&	13	&	6718726	& 827.96 &	12	&	{\bf 6638989}	&	902.34	&	12	&	6718726	&	{\bf 821.47}	&	12	&	{\bf 5412909}	&	547.08	&	{\bf 10}	&	5494570	&	535.39	&	9	&	5494570	&	{\bf 513.72}	&	9	\\
psr-small(49)	&	{\bf 1235}	&	304.26	&	49	&	1320	&	361.18	&	49	&	{\bf 1235}	&	{\bf 191.16}	&	49	&	{\bf 64007}	&	{\bf 316.57}	&	49	&	69039	&	362.46	&	49	&	{\bf 64007} 	&	{\bf 198.13}	&	49	&	7010026	&	810.48	&	49	&	{\bf 6880198}	&	 771.20	&	49	&	{\bf 6880198}	&	{\bf 611.67}	&	49	&	{\bf 3772563}	&	549.34	&	48	&	4049372	&	587.01	&	48	&	{\bf 3772563}	&	{\bf 447.33}	&	48	\\
rovers(8)	&	{\bf 83542}	&	{\bf 24.93}	&	8	&	108090	&	25.71	&	8	&	{\bf 83542}	&	26.71	&	8	&	{\bf 1177111}	&	84.41	&	7	&	1825793	&	106.05	&	7	&	{\bf 1177111}	&	{\bf 84.28}	&	7	&	5252516	&	397.45	&	6	&	{\bf 4416861}	&	{\bf 319.85}	&	6	&		{\bf 4416861}	&	373.78	&	6	&	{\bf 2779}	&	3.92	&	4	&	5715	&	4.55	&	4	&	{\bf 2779}	&	{\bf 3.65}	&	4	\\
satellite(7)	&	{\bf 3884}	&	{\bf 12.19}	&	7	&	111932	&	19.78	&	7	&	{\bf 3884}	&	12.65	&	7	&	1032509	&	126.21	&	6	&	{\bf 1017953}	&	{\bf 114.27}	&	6	&	1032509	&	123.1	&	6	&	180146	&	19.3	&	4	&	{\bf 160173}	&	{\bf 17.95}	&	4	&	{\bf 160173}	&	20.37	&	{\bf 5}	&	{\bf 520289}	&	{\bf 55.34}	&	4	&	635635	&	64.37	&	4	&		{\bf 520289} 	&	60.18	&	4	\\
storage(15)	&	427	&	{\bf 27.73}	&	15	&	{\bf 269}	&	28.52	&	15	&	{\bf 269}	&	27.9	&	15	&	206429	&	64.78	&	15	&	{\bf 84665}	&	57.04	&	15	&	{\bf 84665}	&	{\bf 53.43}	&	15	&	{\bf 2088654}	&	221.27	&	15	&	5244618	&	494.19	&	15	&	{\bf 2088654}	&	{\bf 214.41}	&	15	&	{\bf 3202242}	&	{\bf 326.25}	&	14	&	3507853	&	356.41	&	14	&	3507853	&	342.9	&	14	\\
tpp(6)	&	311	&	{\bf 3.68}	&	6	&	311	&	3.95	&	6	&	311	&	3.95	&	6	&	101221	&	{\bf 11.80}	&	6	&	101221	&	14.83	&	6	&	101221	&	14.51	&	6	&	2095337	&	206.86	&	6	&	2095337	&	{\bf 160.80}	&	6	&	2095337	&	209.94	&	6	&	36648	&	{\bf 6.57}	&	5	&	36648	&	7.9	&	5	&	36648	&	8.09	&	5	\\
trucks-strips(8)	&	38473	&	27.38	&	8	&	{\bf 29851}	&	{\bf 27.17}	&	8	&	{\bf 29851}	&	27.66	&	8	&	7065723	&	600.02	&	7	&	{\bf 4452166}	&	{\bf 341.06}	&	7	&	{\bf 4452166}	&	346.03	&	7	&	{\bf 1620220}	&	180.98	&	{\bf 5}	&	2969121	&	242.4	&	4	&	{\bf 1620220}	&	{\bf 168.03}	&	{\bf 5}	&	5138304	&	467.46	&	4	&	{\bf 4058111}	&	421.50	&	4	&	{\bf 4058111}	&	{\bf 415.34}	&	4	\\
zenotravel(10)	&	6960	&	16.54	&	10	&	6960	&	17.62	&	10	&	6960	&	{\bf 15.43}	&	10	&	5013548	&	415.17	&	10	&	5013548	&	352.37	&	10	&	5013548	&	{\bf 315.64}	&	10	&	280575	&	38.85	&	8	&	280575	&	{\bf 33.98}	&	8	&	280575	&	35.16	&	8	&	2277262	&	{\bf 255.95}	&	8	&	2277262	&	273.86	&	8	&	2277262	&	250.81	&	8	\\
\hline																																																																									
total(430)	&	4138755	&	4046.33	&	420	&	4236951	&	4091.39	&	420	&	{\bf 4104638}	&	{\bf 3906.66}	&	420	&	113836335	&	9289.28	&	381	&	111384584	& 8955.7	&	{\bf 382}	&	{\bf 110745534}	&	{\bf 8617.48}	&	{\bf 382}	&	{\bf 112896361}	&	{\bf 10339.27}	&	329	&	116224684	&	10682.01	&	327	&	137702333	&	11063.53	&	{\bf 330}	&	{\bf 71374198}	&	8255.69	&	295	&	72523665	&	8456.78	&	295	&	72127388	&	{\bf 8063.11}	&	295	\\
				\hline				
			\end{tabular}
		}
	}
	\caption{\label{results:unitsplit} Performance of the selective action split baseline {\footnotesize $S_{base}$} version vs. {\footnotesize $S_{blind}$} version, with 25\% budgets relative to {\footnotesize $c^\ast$}.}
\end{table*}

\section{Empirical Evaluation}
\label{chap:ecaluations} 
As OSP still lacks a standard suite of benchmarks for comparative evaluation, we have cast in this role the STRIPS classical planning tasks from the International Planning Competitions (IPC) 1998-2006. This ``translation'' to OSP was done by associating a separate value (0,1 and 2) with each fact in the corresponding classical IPC task. Note, the selective action split manages to determine the net utility of actions perfectly, without split, in most of the domains and tasks. Having said that, we expect no change in the performance in domains that action split appeared to be unnecessary. As expected the performance in most of the domains remained similar. To have a closer look on task on which  the unit-effect split is targeted we have to refine the data in order to capture the insights behind the data. We now describe the results for each of these action split methods.

Tables~\ref{results:unitsplit} compare the expanded nodes, total time and total number of problems solved by the {\em improving} approach applying the baseline {\footnotesize $S_{base}$} split version versus the {\footnotesize $S_{blind}$} split version and the the {\footnotesize $S_{preTotal}$} unit-effect split version. As these Tables show, for task with budget of 50\% and 75\%, {\footnotesize $S_{blind}$} improved the performance of the selective action split in terms of total expanded nodes and time, and managed to solve 3 tasks more in total. In tasks with budget of 25\% and 100\% the {\footnotesize $S_{base}$} performed better than the {\footnotesize $S_{blind}$} in terms of total expanded nodes and time. 

\begin{table}[t]
	\centering
	\setlength{\tabcolsep}{.18em}
	{\scriptsize
		\resizebox{0.7\columnwidth}{!}{%
			\begin{tabular}{l||c|c|c|c||c|c|c|c|}	
				&\multicolumn{4}{c|}{\sl Rel. Change $S_{base} \rightarrow S_{blind}$  }&\multicolumn{4}{c|}{\sl  Rel. Change $S_{base} \rightarrow S_{preTotal}$  }\\
                  \cline{2-9}
				& {\sl 25\%}& {\sl 50\%}& {\sl 75\%} &  {\sl 100\%}& {\sl 25\%}& {\sl 50\%}& {\sl 75\%} &  {\sl 100\%}\\
				\hline									
miconic	&	0	&	0	&	0	&	{\bf+19\%}	&	0	&	0	&	0	&	{\bf+18\%}	\\
pipesw-nt	&	{\bf+43\%}	&	0	&	{\bf+65\%}	&	{\bf+66\%}	&	{\bf+43\%}	&	{\bf+65\%}	&	{\bf+66\%}	&	0	\\
\hline  		
\hline  	
rovers	&	{\bf+29\%}	&	{\bf+55\%}	&	{\bf+19\%}	&	{\bf+106\%}	&		{\color{blue} \bf 0}	&	{\color{blue} \bf 0}	&{\color{blue} \bf 0}	&	{\color{blue} \bf 0}	\\
satellite	&	{\bf+2782\%}	&	{\bf+12\%}	&	0	&	{\bf+22\%}	&	{\color{blue} \bf 0}	&	{\color{blue} \bf 0}	&	{\color{blue} \bf 0}	&	{\color{blue} \bf 0}	\\
\hline  	
\hline  	
storage	&	{\color{red} \bf-37\%}	&	{\color{red} \bf-59\%}	&	{\color{red} \bf-60\%}	&	0	&	{\color{red} \bf-37\%}	&	{\color{red} \bf-59\%}	&	{\color{red} \bf-60\%}	&	0	\\
trucks	&	{\color{red} \bf-22\%}	&	{\color{red} \bf-37\%}	&	{\color{red} \bf-45\%}	&	{\color{red} \bf-21\%}	&	{\color{red} \bf-22\%}	&	{\color{red} \bf-37\%}	&	{\color{red} \bf-45\%}	&	{\color{red} \bf-21\%}	\\
				\hline  							
			\end{tabular}
		}
	}
	\caption{\label{results:table:relative:change} Relative change in the number of expended nodes with activation of {\footnotesize $S_{blind}$} version of the selective action split relative to the baseline {\footnotesize $S_{base}$} version across all budgets with a 10\% threshold.}
\end{table}

Table~\ref{results:table:relative:change} provides a better look on the machinery of unit-effect split and actions structure. This table shows the relative change in terms of expanded nodes due to the activation of {\footnotesize $S_{blind}$}. We set a threshold of 10\% to filter domains that had significant relative change. Iterative evaluation of significant level of relative change will allow us to learn properties of optimal split and a properties based, critical decision criteria for the choice of a unit-effect action split method. 
The results in red bold font represent a relative decrease in the number of expanded nodes due to activation of the unit-effect split and the results in bold black font represent a relative increase in the number of expanded node. With this representation it is easy to see that the impact of the unit-effect split depends on domain specifics. More specifically, the decision to apply unit-effect compilations depends on action structure and proprieties of actions. As Table~\ref{results:table:relative:change} shows, activation of {\footnotesize $S_{blind}$} allowed for significant improvement in {\em storage} and {\em trucks} domains, and retreat in the performance in {\em miconic}, {\em pipesworld-notankage}, {\em rovers} and {\em satellite} domains. 
We compared also {\footnotesize $S_{preTotal}$} unit-effect split version of selective action split versus the baseline {\footnotesize $S_{base}$} selective action split versus the {\footnotesize $S_{preTotal}$}.  As these Tables show, for task with budget of 25\%, 50\% and 75\%, {\footnotesize $S_{preTotal}$} improved the performance of the selective action split in terms of total expanded nodes and time and managed to solve 2 tasks more in total. In tasks with budget of 100\% the {\footnotesize $S_{base}$} performed better {\footnotesize $S_{preTotal}$} in terms of total expanded nodes and time.


Table~\ref{results:unitsplit} shows the relative change in terms of expanded nodes due to the activation of {\footnotesize $S_{preTotal}$}. We set a threshold of 10\% to filter domains that had significant relative change. 
The results in red bold font represent a relative decrease in the number of expanded nodes due to activation of the unit-effect split and the results in bold black font represent a relative increase in the number of expanded node. 
As Table~\ref{results:table:relative:change} shows, with activation of {\footnotesize $S_{preTotal}$} unit-effect split, the performance improvement in {\em storage} and {\em trucks} domains, remained while, additionally the retreat in performance of {\em rovers} and {\em satellite} domains that observed in previous experiment, with {\footnotesize $S_{blind}$} selective action split was fixed.

\section{Conclusion and Future Work}

While the initial reason to split actions is to define the net utility of an action, there is a secondary effect that we must take in account when we split actions. Recall, we split actions when they have few instances with different net utility signs. Suppose an action {\footnotesize $\action$} appears in a landmark {\footnotesize $\disland$}. Then standard landmark backchaining techniques 
will allow us to discover more landmarks back from {\footnotesize $\disland$}. However, if we know that {\footnotesize $\action$} must yield positive net utility in {\footnotesize $\disland$}, by splitting we gain more information about the state which must hold when {\footnotesize $\action$} is applied, therefore yielding more informative landmarks. For future work, we will extend the experiments to more complex domains and utility settings and analyze the structure properties of split actions with an objective to obtain optimal criteria for unit-effect action split. Here we evaluated two versions of unit-effect split in which the decision to activated the unit split is per-task bulk decision. In future work we will implement a machinery of per-action unit split decision which will allow for a better insights and exploitation of action properties in the context of choosing optimal split method.

\bibliographystyle{aaai}
\bibliography{muller}
\end{document}